\pdfoutput=1
\documentclass[journal]{IEEEtran}
\IEEEoverridecommandlockouts

\usepackage[utf8]{inputenc}
\usepackage[T1]{fontenc}
\usepackage{amsmath,amssymb,amsfonts}
\usepackage{graphicx}
\usepackage{booktabs}
\usepackage{algorithm}
\usepackage{algorithmic}
\usepackage{cite}
\usepackage{url}
\usepackage{hyperref}
\usepackage{xcolor}
\usepackage{multirow}
\usepackage{subcaption}
\usepackage{siunitx}
\usepackage{bm}
\usepackage{balance}
\usepackage{pifont}
\usepackage{xspace}
\usepackage{amsthm}
\usepackage{diagbox}
\usepackage{tikz}
\usetikzlibrary{arrows.meta, fit, backgrounds, calc, positioning}

\newtheorem{proposition}{Proposition}

\newtheorem{corollary}{Corollary}
\newtheorem{remark}{Remark}

\theoremstyle{definition}
\newtheorem{definition}{Definition}

\hypersetup{colorlinks=true, linkcolor=blue, citecolor=blue, urlcolor=blue}

\DeclareMathOperator*{\argmax}{arg\,max}
\DeclareMathOperator*{\argmin}{arg\,min}

\newcommand{\norm}[1]{\left\|#1\right\|}
\newcommand{\kl}{\mathrm{KL}}
\newcommand{\bkl}{\mathrm{BKL}}

\newcommand{\eg}{\textit{e.g.}\xspace}
\newcommand{\etal}{\textit{et al.}}

\newcolumntype{C}{>{\centering\arraybackslash}p{\widthof{\textbf{G-PROBE}}}}
\begin{document}
\flushbottom
\setlength{\parskip}{0pt plus 1.5pt}

\title{G-PROBE: Cross-FOV Place Recognition and\\Certainty-Coupled Localization for 3D Point Clouds}

\author{Jinseop~Lee%
  \thanks{J.~Lee is with SK Intellix, Seoul, Republic of Korea
    (e-mail: \texttt{jinseop.llee@gmail.com}).}%
}

\maketitle

\begin{abstract}
Global localization from 3D point clouds remains challenging under limited or asymmetric fields of view (FOV), which fail to provide the dense, symmetric coverage that place recognition methods assume.
We present \textbf{G-PROBE}, a learning-free global localization framework that removes this assumption. A virtual sensor decomposition runs the same pipeline, by design, on configurations ranging from a narrow-FOV sensor to a panoramic or multi-sensor rig.
The front-end enumerates cross-FOV branch ensembles that encode heading hypotheses for heading-invariant place recognition. A score-scale-invariant, tuning-free \textbf{$\gamma$-SGRT} suppresses heading aliasing under partial FOV and provably becomes inert at symmetric $360^\circ$.
The back-end, \textbf{CG-GICP}, refines a coarse full-cloud GICP with a pass restricted to high-certainty co-observed points selected by a bird's-eye-view certainty map (a by-product of front-end scoring). This \textbf{certainty coupling} links descriptor evaluation to 6-DoF metric pose estimation without an external verification module.
Evaluated on five LiDAR datasets and three modalities (mechanical, solid-state, FMCW), G-PROBE attains the highest learning-free multi-session F1 on average and is competitive in panoramic single-session settings.
Where hand-crafted and zero-shot supervised baselines collapse under wide$\leftrightarrow$narrow cross-sensor pairing, it remains usable end-to-end (up to $55.0\%$ vs.\ ${\le}6.8\%$ success), and under FOV asymmetry ($360^\circ\!\to\!60^\circ$) it retains ${\sim}54\%$ Recall@1, ${\sim}18\times$ the strongest learning-free baseline.
\end{abstract}

\begin{IEEEkeywords}
3D Point Clouds, LiDAR, Global Localization, Certainty-Guided Registration, Cross-FOV Heading Invariance, Place Recognition
\end{IEEEkeywords}

\section{Introduction}
\label{sec:introduction}

Global localization, the estimation of a robot's metric pose directly from a prior map without an initial pose estimate, is fundamental to robust autonomous navigation and loop closure in SLAM.
Place recognition is well established for cameras~\cite{galvez2012bags} and dense $360^\circ$ LiDAR~\cite{kim2018scan,kim2021scan}, but the mature LiDAR solutions implicitly assume a dense, symmetric, panoramic field of view (FOV).
Localization on diverse, FOV-constrained point clouds has received comparatively little attention, even as the sensors that produce them are increasingly deployed for their long sensing range and compact, mechanically robust solid-state design. A recent survey likewise names limited FOV, cross-sensor configuration, and unbalanced matching among the open problems of global LiDAR localization~\cite{yin2024global}.

\subsection{The Heterogeneity and Cross-FOV Heading Problem}

The central challenge addressed in this paper is the mismatch between the panoramic assumptions of existing methods and the constrained, heterogeneous FOVs of modern sensor configurations.

\textbf{Problem 1: Sensor Heterogeneity and Modality Invariance.}
Emerging autonomous systems deploy a mix of spinning, directional solid-state, and frequency-modulated continuous-wave (FMCW) LiDARs, each with distinct point densities, noise distributions, and FOVs. Modality-invariant matching across these sensor types remains an open challenge. Recent methods attempt this using overlap-based deep learning (HeLiOS~\cite{lu2024helios}, following the overlap supervision of OverlapNet~\cite{chen2021overlapnet}) or Vision Foundation Models (VFMs)~\cite{unilgl2024}, but require supervised training to adapt to new sensors and, absent that adaptation, degrade on unseen scanning patterns (\S\ref{sec:cross_sensor}).

\begin{figure}[!t]
    \centering
    \includegraphics[width=0.99\columnwidth]{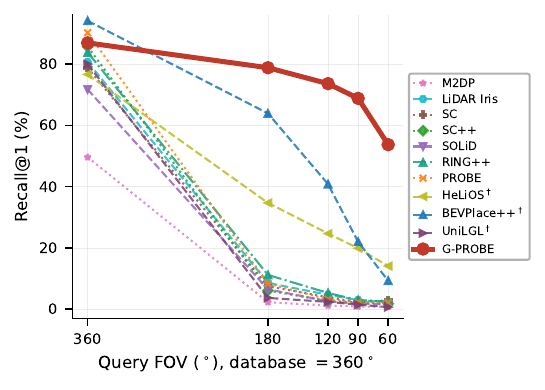}
    \vspace{-1.6em}
    \caption{\textbf{Place recognition under FOV asymmetry.} Recall@1 as the query field of view is cropped from $360^\circ$ to $60^\circ$ against a panoramic ($360^\circ$) database (asymmetric block of Table~\ref{tab:sym_fov}, macro mean over the $12$ single-session sequences of three datasets). Most hand-crafted descriptors collapse by $180^\circ$ (to ${\sim}2\%$ at $60^\circ$). The supervised baselines ($\dagger$) fall to ${\le}15\%$ at $60^\circ$ (HeLiOS, the most gradual, retains $14\%$). \textbf{G-PROBE} alone retains ${\sim}54\%$ even at $60^\circ$, at a CPU runtime on par with SC++ (Table~\ref{tab:runtime_compare}).}
    \label{fig:teaser}
\end{figure}

\textbf{Problem 2: Heading-Dependent Coverage.}
Many platforms observe only a partial azimuth: a single narrow-FOV solid-state LiDAR (\eg, a $70^\circ$ Livox) or multiple sensors with a combined blind spot. A robot revisiting from a different heading then observes the scene through a different angular sector, so polar descriptors that gain rotation invariance by full-ring column shifting~\cite{kim2018scan} become unreliable. A narrow FOV also admits only a small angular slice of the scene at once, leaving little distinctive structure for reliable matching and compounding the geometric degeneracy of long featureless paths that corrupts both recognition and registration.

\subsection{Limitations of Existing Approaches}

Prior works address these problems only in isolation (\S\ref{sec:related}): panoramic descriptors assume dense $360^\circ$ coverage~\cite{kim2018scan,kim2021scan}, FOV-aware methods rely on fixed heuristics or supervised fine-tuning~\cite{kim2024solid,unilgl2024}, and degeneracy-aware registration and odometry reason purely in geometric space~\cite{tuna2024x,genzicp2025}. None of them links the per-region reliability the descriptor computes to the alignment the registration performs. Hand-crafted descriptors discard their internal state after retrieval, and registration ignores observational confidence.

\subsection{Probing Reliability from Geometry}

Rather than learn modality invariance, we probe per-bin structural reliability directly from sensor geometry. During cross-FOV ensemble evaluation the descriptor computes per-bin mutual occupancy agreement between query and database. Bins on solid structure show high mutual occupancy and low uncertainty, FOV-boundary and sparse bins the opposite. This single computation serves both stages of the pipeline. The mutual-agreement score drives the cross-FOV match, addressing the heterogeneity and heading problems above. The same internal state, extracted as a bird's-eye-view (BEV) certainty map $c(r,s)\in[0,1]$ and fed to registration, yields a certainty-coupled architecture in which the front-end informs the back-end about observational reliability. The coupling thus incurs no additional cost.

\subsection{Contributions}

This paper makes three contributions:
\begin{enumerate}
    \item \textbf{Cross-FOV Ensemble Place Recognition} (\S\ref{sec:gprobe}): A framework that decomposes any sensor configuration (a gap-free multi-sensor rig or a single sensor; Definition~\ref{def:virtual}) into cross-FOV branch descriptors with FOV-overlap weighting, parameterized solely by the total azimuthal coverage and its orientation rather than the physical sensor count, enabling heading-invariant retrieval without panoramic FOV assumptions. A score-scale-invariant Softmax Gap Ratio Test ($\gamma$-SGRT) adds tuning-free, FOV-adaptive suppression of heading aliasing, yielding a uniqueness probability ($P_{dom}$) that flags heading-ambiguous matches.
    
    \item \textbf{Certainty-Guided Registration (CG-GICP)} (\S\ref{sec:dice}): A two-pass coarse-to-fine Generalized-ICP in which a per-bin BEV certainty map, a by-product of G-PROBE's occupancy scoring, selects the high-certainty co-observed points for the refinement pass. This recognition-to-registration coupling operates without any external verification module (\eg, RANSAC) and suppresses the worst-case error tail from one-sided, FOV-boundary correspondences, leaving the typical case unchanged (\S\ref{sec:cross_sensor_reg}).

    \item \textbf{Heterogeneous Evaluation} (\S\ref{sec:experiments}): An evaluation of the learning-free, CPU-only system across five datasets and three LiDAR modalities (mechanical, solid-state, FMCW), with FOV co-visibility ground truth, via a $3{\times}3$ cross-sensor place recognition matrix, cross-day end-to-end metric localization, an FOV-controlled cross-sensor study, and a registration ablation. G-PROBE attains the highest learning-free multi-session F1 on average and the best cross-FOV robustness among all baselines under FOV asymmetry. Under extreme asymmetry ($360^\circ\!\to\!60^\circ$) it retains ${\sim}54\%$ Recall@1, ${\sim}18\times$ the strongest learning-free baseline, while most prior descriptors collapse below 4\% (Fig.~\ref{fig:teaser}).
\end{enumerate}

\textbf{Relation to PROBE.} This paper builds on PROBE~\cite{lee2026probe}, a symmetric-FOV, retrieval-only descriptor that introduced the closed-form analytical marginalization over continuous translations (\S\ref{sec:scoring}). PROBE was validated predominantly in the panoramic, full-ring setting. Under FOV restriction its retrieval keys degrade, with only a brute-force, retrieval-bypassing variant recovering partial accuracy on KITTI, and it explicitly flagged this missing-sector key distortion as an open problem. G-PROBE inherits this probabilistic core and extends it from a single descriptor to a complete global localization system: (i) the panoramic, full-ring retrieval becomes a general cross-FOV framework (virtual sensor decomposition, FOV-overlap masking, $\gamma$-SGRT) whose per-pair occupancy divergence reduces exactly to PROBE's under homogeneous full co-visibility with column trimming disabled, a strict generalization; (ii) a metric back-end, CG-GICP, coupled through a by-product certainty map, turns the retrieval-only descriptor into an end-to-end 6-DoF localizer; and (iii) PROBE's largely single-sensor, $360^\circ$ validation is replaced by a heterogeneous evaluation over five datasets, three modalities, and a $3{\times}3$ cross-sensor matrix.

\section{Related Work}
\label{sec:related}

\subsection{Place Recognition for 3D Point Clouds}
\textbf{Hand-crafted descriptors.}
Early methods match local point features and keypoint votes~\cite{steder2010robust,bosse2013place}. Later descriptors compress each scan into a compact signature: M2DP~\cite{he2016m2dp} via multi-plane projection, LiDAR Iris~\cite{wang2020lidar} via LoG-Gabor binary signatures, the widely adopted polar Scan Context family (SC~\cite{kim2018scan}, SC++~\cite{kim2021scan}), and intensity-coded variants that exploit reflectivity (DELIGHT~\cite{cop2018delight}, Intensity Scan Context~\cite{wang2020intensity}). Bag-of-words on LinK3D keypoint features further enables real-time loop closing (BoW3D~\cite{cui2023bow3d}).
A parallel analytical line of work achieves invariance in closed form. The RING family~\cite{lu2022ring,lu2023ring++} builds roto-translation-invariant representations via the Radon transform (extended with equivariant learning in RING\#~\cite{lu2024ringsharp}), while triangle descriptors (STD~\cite{yuan2023std}, BTC~\cite{yuan2024btc}) match viewpoint-invariant keypoint triangles. Recent surveys~\cite{yin2024global,zhang2024survey} cover the broader landscape.

\textbf{From discrete sampling to analytical marginalization.}
The polar descriptors above attain invariance through discrete operations: polar column shifts and, in SC++~\cite{kim2021scan}, virtual-view augmentation that adds computation per perturbation.
PROBE~\cite{lee2026probe} replaces such sampling with a closed-form marginalization over continuous translations.

\textbf{Learned descriptors.}
Following NetVLAD~\cite{arandjelovic2016netvlad}, deep networks learn global descriptors end-to-end: PointNetVLAD~\cite{uy2018pointnetvlad}, the graph-augmented LPD-Net~\cite{liu2019lpdnet}, MinkLoc3D~\cite{komorowski2021minkloc3d}, LoGG3D-Net~\cite{vidanapathirana2022logg3d}, the differentiable DiSCO~\cite{wang2021disco}, the transformer-based OverlapTransformer~\cite{ma2022overlaptransformer}, and the rotation-equivariant BEVPlace++~\cite{luo2024bevplace}, which also recovers a 3-DoF pose. Further variants pursue cross-attention single-scan retrieval (CASSPR~\cite{xia2023casspr}) and learned 6-DoF relocalization (EgoNN~\cite{komorowski2022egonn}).
All require supervised training, whereas G-PROBE models the underlying geometric symmetries explicitly, a learning-free alternative.

\subsection{FOV-Constrained and Cross-Modal Retrieval}
OverlapNet~\cite{chen2021overlapnet} estimates dense range-image overlap but relies on dense panoramic range images (\eg, 64-beam). SOLiD~\cite{kim2024solid} organizes range- and azimuth--elevation bins with a fixed heuristic reweighting, deriving a 1-DoF initial heading from the azimuth channel, but its narrow-FOV encoding degrades under reverse-heading revisits (orientation beyond ${\sim}180^\circ$). Density-map loop closing~\cite{gupta2026mapclosures} also spans heterogeneous LiDARs, but matches ORB features over odometry-accumulated local maps rather than single scans. Learned methods target this regime (HeLiOS~\cite{lu2024helios} via overlap-based learning with a local spherical transformer, UniLGL~\cite{unilgl2024} via VFM-based multi-BEV viewpoint-invariance learning, and SHeRLoc~\cite{kim2025sherloc} bridging spinning radar and 4D radar), but all require supervised training on the target suite and, without it, degrade out-of-distribution (\S\ref{sec:cross_sensor}). G-PROBE instead addresses this regime without retraining. Cross-FOV ensembles with trimmed Bernoulli-KL (BKL) scoring drop the highest-divergence sectors, and virtual sensor decomposition (Definition~\ref{def:virtual}) treats physical rigs and partitioned panoramic sensors identically.

\subsection{Robust Point Cloud Registration}
ICP~\cite{besl1992icp} and Generalized-ICP (GICP)~\cite{segal2009gicp} are local optimizers that drift along under-constrained directions in degenerate scenes (urban canyons, straight paths, open fields) and under partial overlap, as in asymmetric FOV. Degeneracy- and robustness-aware variants address this in geometric space: Hessian eigenvalue analysis~\cite{zhang2016degeneracy}, localizability scoring (X-ICP~\cite{tuna2024x}), an adaptive correspondence threshold (KISS-ICP~\cite{vizzo2023kiss}), planarity-adaptive point-to-plane/point-to-point weighting (GenZ-ICP~\cite{genzicp2025}), or unreliable-closure rejection (DARE-SLAM~\cite{dareslam2021}), all at or after correspondence formation.
A complementary line of work achieves outlier robustness through certifiable optimization (TEASER~\cite{yang2021teaser}), graph-theoretic consensus (maximal cliques~\cite{zhang2023mac}), or initialization-free global registration from sparse correspondences (Quatro~\cite{lim2022quatro}, G3Reg~\cite{qiao2024g3reg}), but all likewise operate on putative correspondences. Learned registration extracts more reliable correspondences (PREDATOR~\cite{huang2021predator} for low-overlap pairs, GeoTransformer~\cite{qin2022geotransformer} via geometric self-attention) while LCDNet~\cite{cattaneo2022lcdnet} jointly learns loop closure and registration. Yet all are supervised.
CG-GICP instead derives its reliability signal from descriptor space. The BEV certainty map flags unreliable constraints (FOV boundaries, sparse regions) from the mutual occupancy agreement before registration. At the system level, modern LiDAR mapping stacks~\cite{shan2020liosam,xu2022fastlio2,lv2023immesh} either omit place recognition or attach it as a separate, loosely coupled module, leaving the cross-heading FOV problem unsolved. Table~\ref{tab:positioning} compares G-PROBE with representative baselines.

\begin{table}[t]
    \centering
    \caption{Comparison of G-PROBE with representative baselines. \textbf{Cross-FOV}/\textbf{Cross-Sensor} denote each method's designed scope, not measured performance (several supervised baselines collapse out-of-distribution, Tables~\ref{tab:sym_fov} and~\ref{tab:cross_sensor_ap}). \textbf{Partial} marks a regime targeted only in part. \textbf{Metric Pose}: output completeness (\checkmark${=}$6-DoF SE(3), SE(2)${=}$3-DoF planar, \ding{55}${=}$no metric pose), not convergence under cross-sensor matching (\S\ref{sec:cross_sensor_reg}). $\dagger$ supervised. G-PROBE is the only learning-free method in this comparison that is simultaneously cross-FOV, cross-sensor, and full 6-DoF metric.}
    \label{tab:positioning}
    \setlength{\tabcolsep}{4pt}
    \scriptsize
    \begin{tabular}{lcccc}
        \toprule
        Method & \shortstack{Cross-\\FOV} & \shortstack{Cross-\\Sensor} & \shortstack{Metric\\Pose} & \shortstack{Learning-\\free} \\
        \midrule
        SC~\cite{kim2018scan} & \ding{55} & \ding{55} & \ding{55} & \checkmark \\
        SC++~\cite{kim2021scan} & \ding{55} & \ding{55} & \ding{55} & \checkmark \\
        SOLiD~\cite{kim2024solid} & Partial & \ding{55} & \ding{55} & \checkmark \\
        RING++~\cite{lu2023ring++} & \ding{55} & \ding{55} & SE(2) & \checkmark \\
        PROBE~\cite{lee2026probe} & \ding{55} & \ding{55} & \ding{55} & \checkmark \\
        HeLiOS$^\dagger$~\cite{lu2024helios} & \checkmark & \checkmark & \ding{55} & \ding{55} \\
        BEVPlace++$^\dagger$~\cite{luo2024bevplace} & \ding{55} & \ding{55} & SE(2) & \ding{55} \\
        UniLGL$^\dagger$~\cite{unilgl2024} & Partial & Partial & \checkmark & \ding{55} \\
        \textbf{G-PROBE (Ours)} & \checkmark & \checkmark & \checkmark & \checkmark \\
        \bottomrule
    \end{tabular}
\end{table}

\section{System Overview}
\label{sec:system}

G-PROBE (Generalized PROBE) is a global localization system producing a metric SE(3) pose from a single query scan, extending the preliminary descriptor PROBE~\cite{lee2026probe} from retrieval to arbitrary-FOV metric localization. The G-PROBE front-end (\S\ref{sec:gprobe}) performs cross-FOV ensemble retrieval and FOV-aware BKL scoring with SGRT aliasing suppression, exporting the BEV certainty map computed during scoring. The CG-GICP back-end (\S\ref{sec:dice}) consumes this map to restrict its fine refinement pass to reliably co-observed points (Fig.~\ref{fig:pipeline}). The 6-DoF poses serve as one-shot initialization (kidnapped robot) or loop-closure constraints.

\begin{figure*}[!t]
  \centering
\resizebox{\textwidth}{!}{%
\begin{tikzpicture}[
  body/.style  = {fill={rgb,255:red,227;green,239;blue,245}},
  scorbg/.style= {fill={rgb,255:red,245;green,240;blue,232}},
  scorbdr/.style={draw={rgb,255:red,196;green,163;blue,90}},
  finbg/.style = {fill={rgb,255:red,214;green,234;blue,242}},
  gc/.style    = {color={rgb,255:red,61;green,61;blue,61}},
  bc/.style    = {color={rgb,255:red,46;green,110;blue,142}},
  bd/.style    = {draw={rgb,255:red,46;green,110;blue,142}},
  dbox/.style = {rectangle, rounded corners=2pt,
    bd, thick, body, minimum height=0.9cm, minimum width=1.9cm,
    align=center, inner sep=3pt, font=\small},
  obox/.style = {rectangle, rounded corners=4pt,
    bd, thick, body,
    minimum height=1.0cm, minimum width=2.4cm,
    align=center, inner sep=4pt, font=\small},
  sbox/.style = {rectangle, rounded corners=3pt,
    scorbdr, thick, scorbg, minimum height=0.9cm, minimum width=2.4cm,
    align=center, inner sep=4pt, font=\small},
  fbox/.style = {rectangle, rounded corners=3pt,
    bd, very thick, finbg, minimum height=0.9cm, minimum width=2.3cm,
    align=center, inner sep=4pt, font=\small\bfseries},
  arr/.style  = {-{Stealth[length=4pt,width=3.5pt]}, thick, gc},
  darr/.style = {-{Stealth[length=4pt,width=3.5pt]}, thick, dashed, bc},
  garr/.style = {-{Stealth[length=4pt,width=3.5pt]}, thick, dashed,
    draw={rgb,255:red,176;green,138;blue,55}},
  lb/.style   = {font=\footnotesize, gc, align=center},
  stit/.style = {font=\small\bfseries\sffamily, gc},
]
\node[lb] (raw) at (0.6,-1.7) {Point Cloud\\$\mathcal{P}$ ($\Phi_\text{total}$)};
\node[obox, minimum width=3.2cm] (vd) at (3.6,-1.7)
  {\textbf{Virtual Decomposition}\\
   {\scriptsize $N{=}\max(1,\mathrm{round}(\Phi/90^\circ))$}\\
   {\scriptsize sectors $V_i \to$ pairs $P_{ij}$}\\{\scriptsize (Sec.~IV-A)}};
\node[obox, minimum width=3.0cm] (bev) at (7.6,0)
  {Per-Pair Polar BEV\\
   {\scriptsize occupancy $(\mu,\sigma)$, max-height $z$}\\
   {\scriptsize Jacobian blur}\\{\scriptsize (Sec.~IV-C)}};
\node[dbox, minimum width=2.6cm] (rk) at (7.6,-3.4)
  {Per-Pair Ring Key\\$\bm{k}_{ij}\!\in\!\mathbb{R}^{2N_r}$\\{\scriptsize (Sec.~IV-B)}};
\draw[arr] (raw.east) -- (vd.west);
\draw[arr] (vd.north) |- (bev.west);
\draw[arr] (bev.south) -- (rk.north);
\begin{scope}[on background layer]
  \node[fill=blue!4, rounded corners=6pt, fit=(raw)(vd)(bev)(rk),
        inner sep=0.28cm, inner ysep=0.45cm] (gI) {};
  \node[stit, anchor=south] at (gI.north) {I.~Cross-FOV Descriptor Generation};
\end{scope}
\def\Rx{12.0}
\node[dbox, minimum width=3.0cm] (br) at (\Rx,0)
  {Branch Generation\\{\scriptsize all $(P^q_{ij},P^{\text{db}}_{kl})$: hint $\Delta\psi$,}\\{\scriptsize gate $w_{\text{FOV}}\!\geq\!w_{\min}$}\\{\scriptsize (Sec.~IV-B)}};
\node[dbox, minimum width=3.1cm] (ret) at (\Rx,-3.4)
  {Masked Retrieval\\{\scriptsize symmetric overlap keys, $O(M)$}\\{\scriptsize $w_{\text{FOV}}$-weighted votes $\to$ top-$K$}\\{\scriptsize (Sec.~IV-B)}};
\node[obox, minimum width=2.7cm] (fft) at ({\Rx+4.1},0)
  {FFT Alignment\\{\scriptsize hint-windowed, per branch}\\{\scriptsize (Sec.~IV-C)}};
\node[sbox, minimum width=3.3cm] (bkl) at ({\Rx+4.1},-3.4)
  {\textbf{Branch Scoring}\\
   $S_b{=}w_{\text{FOV}}\!\cdot\!\text{CC}_z\!\cdot\!\text{BKL}$\\
   {\scriptsize symmetric Bernoulli-KL}\\{\scriptsize (Sec.~IV-C)}};
\node[fbox, minimum width=3.0cm] (sgrt) at ({\Rx+8.4},-3.4)
  {$\gamma$-SGRT\\[1pt]
   {\small $S_\text{final}{=}\max_b S_b\cdot P_{dom}^{(1-\gamma)}$}\\{\scriptsize (Sec.~IV-D)}};
\draw[arr] (rk.east) -- ++(0.7,0) |- (ret.west);
\draw[darr] (vd.east) -| ($(bev.east)!0.5!(br.west)$) -- (br.west);
\node[lb,font=\scriptsize] at (6.6,-1.45) {pairs $P_{ij}$};
\draw[arr] (br.south) -- (ret.north);
\draw[arr] (ret.east) -- ++(0.45,0) coordinate (cc) |- (fft.west);
\node[font=\scriptsize, align=center, fill=white, inner sep=1.2pt] at ($(cc)+(0,1.7)$) {cand.\\top-$K$};
\draw[arr] (fft.south) -- (bkl.north);
\draw[arr] (bkl.east) -- ++(0.5,0) |- (sgrt.west);
\draw[darr] (bev.north) -- ++(0,0.35) -| (fft.north);
\begin{scope}[on background layer]
  \node[fill=orange!4, rounded corners=6pt, fit=(br)(ret)(fft)(bkl)(sgrt),
        inner sep=0.28cm, inner ysep=0.45cm] (gII) {};
  \node[stit, anchor=south, color=orange!80!black] at (gII.north)
    {II.~Cross-Pair Branch Retrieval \& Verification};
\end{scope}
\def\Tx{24.7}
\node[sbox, minimum width=2.8cm] (cmap) at (\Tx,0)
  {\textbf{Certainty Map} $c(r,s)$\\{\scriptsize scoring by-product}\\{\scriptsize (Sec.~V-A)}};
\node[dbox, minimum width=2.5cm] (coarse) at (\Tx,-3.4)
  {Stage 1: GICP\\{\scriptsize full clouds (coarse)}\\{\scriptsize (Sec.~V-C)}};
\node[dbox, minimum width=3.2cm] (fine) at ({\Tx+3.5},-1.7)
  {Stage 2: Fine GICP\\{\scriptsize certainty-filtered source}\\{\scriptsize (Sec.~V-C)}};
\node[fbox, minimum width=2.4cm] (pose) at ({\Tx+6.8},-1.7)
  {6-DoF Pose};
\draw[arr] (sgrt.east) -- (coarse.west) node[midway,below,align=center,font=\scriptsize]{$\Delta\psi$ init};
\draw[garr] (cmap.east) -| (fine.north) node[pos=0.72,right]{\scriptsize $c\!\geq\!\tau_c$};
\draw[arr] (coarse.east) -| (fine.south);
\draw[arr] (fine.east) -- (pose.west);
\draw[garr] ([yshift=0.3cm]bkl.east) -- ++(0.4,0) |- (cmap.west);
\node[lb, font=\scriptsize] at (20.6,0.28) {$(\mu,\sigma)$};
\begin{scope}[on background layer]
  \node[fill=green!5, rounded corners=6pt, fit=(cmap)(coarse)(fine)(pose),
        inner sep=0.28cm, inner ysep=0.45cm] (gIII) {};
  \node[stit, anchor=south, color=green!50!black] at (gIII.north)
    {III.~Certainty-Coupled Registration};
\end{scope}
\end{tikzpicture}%
}
  \caption{G-PROBE pipeline. \textbf{(I)}~Any sensor configuration is decomposed into $N$ virtual sectors and $\binom{N}{2}$ pairs (Definition~\ref{def:virtual}). Each pair builds a polar BEV (Bernoulli occupancy $(\mu,\sigma)$, max-height) and a rotation-robust ring key. \textbf{(II)}~Each query--database pair combination forms a heading-hypothesis branch gated by FOV overlap. Branch-masked keys retrieve in $O(M)$, FFT alignment refines the heading, and the FOV-weighted score $S_b = w_{\text{FOV}}\cdot\text{CC}_z\cdot\bkl$ is aggregated by the FOV-adaptive $\gamma$-SGRT \eqref{eq:race_score}. \textbf{(III)}~The by-product certainty map restricts CG-GICP's fine pass to co-observed structure (gold dashed), yielding a refined 6-DoF pose.}
  \label{fig:pipeline}
\end{figure*}

\section{G-PROBE: Cross-FOV Ensemble Place Recognition}
\label{sec:gprobe}

\subsection{Virtual Sensor Decomposition}
\label{sec:problem}
G-PROBE operates on ego-centric clouds (voxel downsampling). All fixed hyperparameters introduced below are collected with their values in Table~\ref{tab:params}. A revisit may approach from any heading $\Delta\psi \in [0^\circ, 360^\circ)$, so when the merged azimuth $\Phi_\text{total}$ is $<360^\circ$, different sectors observe the same scene, and a heading-invariant system must handle all $\Delta\psi$ without prior knowledge.

\begin{definition}[Virtual Sensor Decomposition]
\label{def:virtual}
Any sensor configuration with contiguous azimuthal coverage (single panoramic or gap-free multi-sensor array), yielding a merged ego-centric point cloud with total coverage $\Phi_\text{total}$ (abbreviated $\Phi$) and center azimuth $\phi_\text{base}$, is mathematically partitioned into $N$ virtual sensors $\{V_1, \ldots, V_N\}$ indexed by ascending center azimuth. Each virtual sensor is defined by its half-FOV $\alpha = \Phi_\text{total} / (2N)$ and center azimuth $\phi_i = \phi_\text{base} + (2i - N - 1) \cdot \alpha$. This decomposition is identity-preserving. The union of virtual FOVs exactly recovers the original coverage space, $\bigcup_{i=1}^{N} [\phi_i - \alpha,\, \phi_i + \alpha] = [\phi_\text{base} - \Phi_\text{total}/2,\, \phi_\text{base} + \Phi_\text{total}/2]$.
\end{definition}

This $SE(2)$ abstraction decouples the pipeline from hardware. Points are assigned to virtual sensors $V_i$ strictly by azimuth. Each virtual-sensor pair (below) encodes an explicit relative-heading hypothesis.

\textbf{Instantiations.} The number of virtual sensors follows the quantization rule $N = \max(1, \mathrm{round}(\Phi_\text{total} / 90^\circ))$, a compute--resolution balance confirmed by ablation (\S\ref{sec:abl_N}). $90^\circ$ sectors keep the $O(N^4)$ cross-pair branch count (\S\ref{sec:retrieval}) within the real-time budget, so we fix $N$ rather than tune it. A panoramic $360^\circ$ LiDAR yields $N{=}4$ sectors ($\alpha = 45^\circ$) centered at the inter-cardinal directions $\phi_i \in \{\pm 45^\circ, \pm 135^\circ\}$. A $270^\circ$ rig yields $N{=}3$. Narrow-FOV LiDARs (\eg, Aeva $120^\circ$, Avia $70^\circ$) resolve to $N{=}1$, preserving the entire FOV as one structural template rather than sub-dividing it.

\label{sec:branches}
\textbf{Pairwise decomposition.} We decompose the virtual sensor array into all $\binom{N}{2}$ pairwise subsets. Each virtual sensor pair $P_{ij} = V_i \cup V_j$ has a union FOV mask $\mathcal{F}_{ij}$ from the individual virtual sector coverages. For $N=1$ ($\binom{1}{2}=0$), a single self-pair $P_{11} \equiv V_1$ is matched against the opposing side's $\binom{N'}{2}$ pairs (its self-pair when $N'{=}1$, with $N'$ the database-side sector count), so the narrow-FOV sensor still participates in cross-heading matching.
The center angle of pair $P_{ij}$ is the circular mean $\bar{\phi}_{ij} = \text{atan2}(\sin\phi_i + \sin\phi_j,\, \cos\phi_i + \cos\phi_j)$. For an antipodal pair (two opposite-facing sectors, \eg the diagonal $V_{+45}{\cup}V_{-135}$ in the $N{=}4$ sensor), where the unit vectors cancel and the circular mean is undefined, we anchor $\bar{\phi}_{ij}$ to the lower-indexed sector (for directly specified physical configurations with unequal sector FOVs, the narrower one).

\textbf{Running example.} The $N{=}4$ sensor's $\binom{4}{2}{=}6$ pairs place the four adjacent pairs on the cardinal axes (front~$0^\circ$, left~$+90^\circ$, rear~$180^\circ$, right~$-90^\circ$) and the two antipodal pairs on the diagonals (Fig.~\ref{fig:frontend_real}).

\begin{figure*}[!t]
  \centering
  \includegraphics[width=0.8\textwidth]{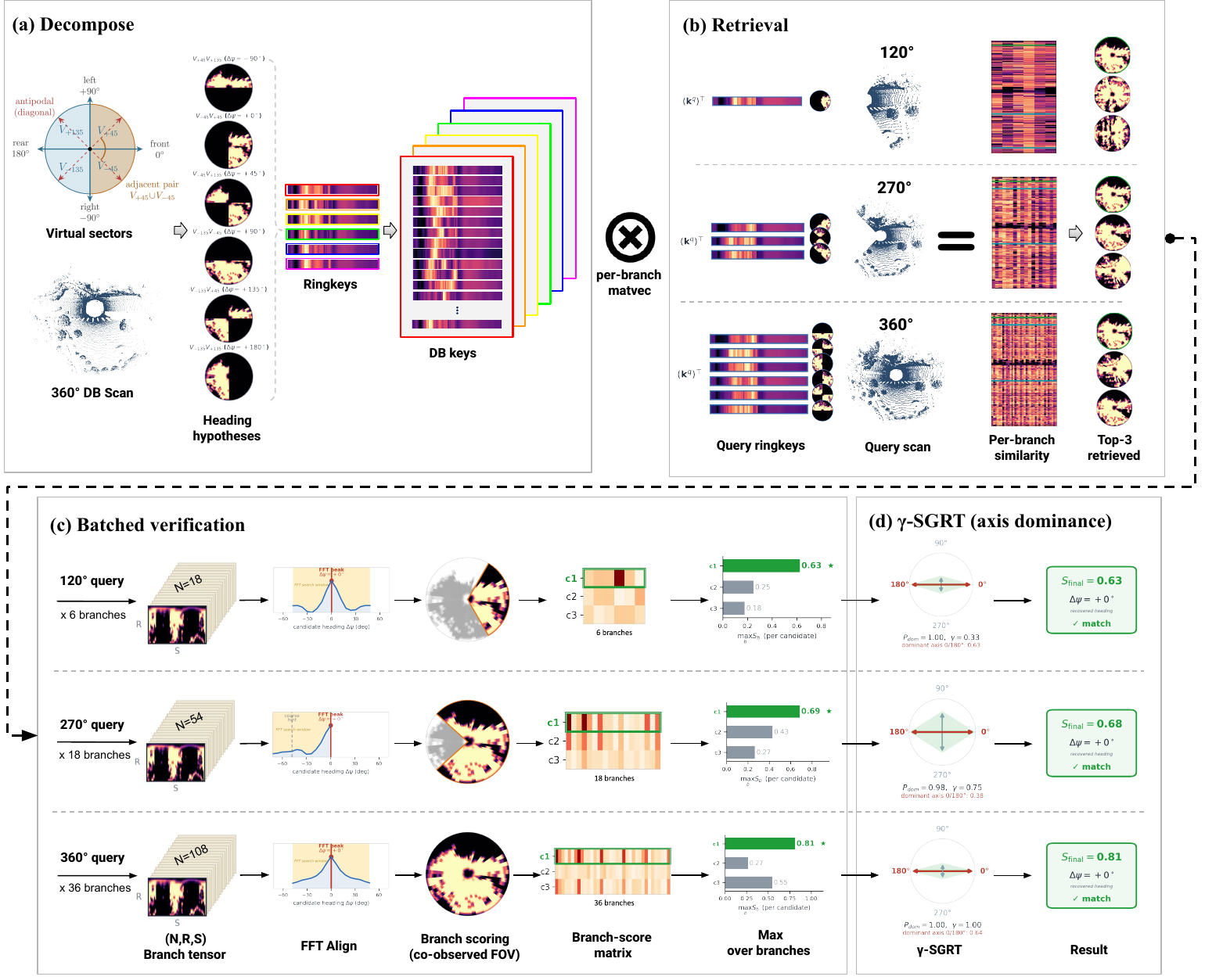}
  \caption{\textbf{The G-PROBE front-end} (Fig.~\ref{fig:pipeline}, parts~I--II) on a real revisit: HeLiPR KAIST06 query vs.\ the $360^\circ$ Ouster KAIST05 database at three query FOVs ($120^\circ/270^\circ/360^\circ$, cropped). Polar panels show $\mu(r,s)$. \textbf{(a)~Decompose:} the database splits into $N{=}4$ sectors and $6$ pairs with ring keys. \textbf{(b)~Retrieval:} the query's $1/3/6$ ring keys cast FOV-weighted votes. The true match ranks \textbf{1 at all FOVs} (top-$3$ shown). \textbf{(c)~Batched verification:} all candidate--branch slices stack into one $(N_b{,}N_r{,}N_s)$ tensor ($N_b{=}18/54/108$), each FFT-aligned and scored $S_b{=}w_{\text{FOV}}\!\cdot\!\text{CC}_z\!\cdot\!\bkl$ on its co-observed FOV (orange wedge), then reduced by $\max$ over branches. \textbf{(d)~$\gamma$-SGRT:} the dominant $0/180^\circ$ axis ($P_{dom}{\approx}1$) yields $S_\text{final}{=}0.63/0.68/0.81$ with recovered $\Delta\psi{=}0^\circ$, correct at all FOVs.}
  \label{fig:frontend_real}
\end{figure*}

\subsection{Stage 1: Density-Robust Retrieval}
\label{sec:ring_key}
For each sensor pair $P_{ij}$, we compute a $D$-dimensional L2-normalized retrieval key capturing the range-stratified structural statistics of the enclosed point cloud. The key reuses the probabilistic ring-key formulation of PROBE~\cite{lee2026probe} (after Scan Context~\cite{kim2018scan}). The novelty is not the key itself but its use per cross-FOV branch~\eqref{eq:overlap_q}--\eqref{eq:sym_ret}.
The range axis is divided into $N_r$ concentric annuli spanning $[0, R_{\max}]$.
From the polar BEV grid (\S\ref{sec:bev}), each ring $r \in \{1, \ldots, N_r\}$ produces two features:
\begin{equation}
    \bm{f}_r = [\bar{\mu}_r, \bar{z}_r]^{\top}\!,\;
    \bar{\mu}_r = \tfrac{1}{N_s}{\textstyle\sum_s}\mu(r,s),\;
    \bar{z}_r = \tfrac{1}{N_s}{\textstyle\sum_s}z_{\max}(r,s),
    \label{eq:ringfeat}
\end{equation}
where $N_s$ is the number of azimuth bins of the polar BEV grid (\S\ref{sec:bev}), $\bar{\mu}_r$ is the mean structural occupancy (from the polar-blurred $\mu$, mitigating sensor-specific scanning patterns) and $\bar{z}_r$ the mean structure height, which supports platforms of differing sensor height (ground vehicles, quadrupeds). Features are concatenated across rings into a $D$-dimensional vector ($D = N_r \times 2$) and globally L2-normalized, attenuating (without eliminating) the dependence of $\bar{\mu}_r$ on absolute density, sufficient for cross-pair cosine retrieval.

\textbf{Why cross-pair retrieval succeeds.}
The ring key is range-stratified, not azimuth-dependent. A revisit from a different heading sees the same structures through a different pair at the same ranges, keeping cross-pair cosine similarity meaningful. The compact key serves the $O(M)$ search. The full polar BEV (\S\ref{sec:bev}) serves verification, cutting retrieval from $O(M \cdot N_r \cdot N_s)$ to $O(M \cdot D)$, $D = 2N_r \ll N_r N_s$.

\label{sec:retrieval}
\textbf{Cross-Pair Retrieval and Branch Generation.}
Given query node $q$ and a database of $M$ nodes, we generate all cross-pair branches by enumerating every ${(P_{ij}^{q}, P_{kl}^{\text{db}})}$ combination across query and database pairs.
For each, the heading hint is the pair center angle difference:
\begin{equation}
    \Delta\psi_{\text{hint}} = \big(\bar{\phi}_{ij}^{q} - \bar{\phi}_{kl}^{\text{db}}\big) \bmod 360^\circ,
    \label{eq:hint}
\end{equation}
and the expected FOV overlap ratio is symmetrically normalized:
\begin{equation}
    w_{\text{FOV}} = \frac{|\mathcal{F}_{ij}^{q} \cap \text{roll}(\mathcal{F}_{kl}^{\text{db}}, \Delta\psi_{\text{hint}})|}{\min\!\left(|\mathcal{F}_{ij}^{q}|,\, |\text{roll}(\mathcal{F}_{kl}^{\text{db}}, \Delta\psi_{\text{hint}})|\right)}.
    \label{eq:fov_overlap}
\end{equation}
Here $\text{roll}(\cdot,\Delta\psi)$ circularly shifts a mask by ${\mathrm{round}(\Delta\psi\, N_s/360^\circ)}$ azimuth bins. Normalized by the smaller FOV, the ratio measures the fraction of the usable observation that is shared, even under severe asymmetry ($\text{DB}=90^\circ$, $\text{Query}=360^\circ$). A branch is retained only if $w_{\text{FOV}} \geq w_{\min}$. Enumerating every pair makes reverse-heading revisits ordinary branches with $\Delta\psi_{\text{hint}} \approx 180^\circ$, so no adjacency or reverse-branch logic is needed.

\textbf{Symmetric FOV-overlap retrieval.}
For meaningful descriptor comparison under arbitrary FOV asymmetry, we restrict both query and database ring keys to their shared physical observation region (Fig.~\ref{fig:frontend_real}).
For each branch $b{=}(P_{ij}^{q}, P_{kl}^{\text{db}})$, writing $\mathcal{F}_q{\equiv}\mathcal{F}_{ij}^{q}$, $\mathcal{F}_\text{db}{\equiv}\mathcal{F}_{kl}^{\text{db}}$, and $\Delta\psi_b{\equiv}\Delta\psi_{\text{hint}}$, the FOV overlap is computed per coordinate frame:
\begin{align}
    \mathcal{O}_{q}(b) &= \mathcal{F}_{q} \cap \text{roll}(\mathcal{F}_{\text{db}},\, +\Delta\psi_b), \label{eq:overlap_q} \\
    \mathcal{O}_{\text{db}}(b) &= \mathcal{F}_{\text{db}} \cap \text{roll}(\mathcal{F}_{q},\, -\Delta\psi_b), \label{eq:overlap_db}
\end{align}
where $\mathcal{O}_{q}$ and $\mathcal{O}_{\text{db}}$ represent the same physical region (up to the hint resolution $\Delta\psi_b$), expressed in the query and database frames respectively.
The branch-specific retrieval keys are then:
\begin{align}
    \bm{k}_{b}^{q} &= \text{RingKey}(\text{BEV}_{q} \odot \mathcal{O}_{q}), \nonumber \\
    \bm{k}_{b}^{\text{db}} &= \text{RingKey}(\text{BEV}_{\text{db}} \odot \mathcal{O}_{\text{db}}).
    \label{eq:sym_ret}
\end{align}
Residual misalignment is corrected by the FFT cross-correlation of \S\ref{sec:alignment}. Database keys are pre-computed at insertion and stored in branch-specific matrices, so each branch performs its cosine search $\bm{k}_{b}^{q} \cdot \bm{K}_{b}^{\text{db}}$ in $O(M)$ at a constant-factor memory cost. When $\mathcal{F}_{q} = \mathcal{F}_{\text{db}}$ both masks reduce to the full FOV and~\eqref{eq:sym_ret} degenerates to the unmasked ring key of PROBE~\cite{lee2026probe}.

\textbf{Cross-sensor generalization.}
Eqs.~\eqref{eq:overlap_q}--\eqref{eq:sym_ret} depend only on the FOV masks $\mathcal{F}$ (not on density, scan pattern, or modality), so the masking transfers unchanged across physically distinct sensors (\eg, a $360^\circ$ spinning database vs.\ a $70^\circ$ solid-state query), specified by their FOV extents (validated in \S\ref{sec:cross_sensor}).

\textbf{Vote pooling and ranking.}
The top-$k_b$ results per branch are pooled.
For each database candidate $m$, we record the FOV-weighted branch support and the cumulative similarity:
\begin{equation}
    v_{w}(m) = \sum_{b : m \in \text{top}_{k_b}(b)} w_{\text{FOV}}(b), \quad
    \bar{s}(m) = \sum_{b : m \in \text{top}_{k_b}(b)} \text{sim}_b(m),
    \label{eq:support_count}
\end{equation}
where $w_{\text{FOV}}(b)$ is the branch's FOV-overlap ratio, so low-overlap heading hypotheses are discounted consistently at both stages. A masked ring key spans fewer azimuth bins and casts a weaker vote at equal similarity.
Candidates are ranked lexicographically by $(v_{w}(m), \bar{s}(m))$ (descending), and the top-$K$ candidates advance to the geometric verification of \S\ref{sec:scoring}. The weighted support is a ranking signal only. False-positive rejection is left to the subsequent BKL scoring (\S\ref{sec:scoring}) and SGRT (\S\ref{sec:gap_score}).

\textbf{Running example (cross-FOV, $N{=}4 \to N{=}1$).} A panoramic database ($N{=}4$, six pairs) queried by a narrow sensor ($N{=}1$; \eg Ouster$\to$Aeva) yields $6$ branches. Whichever database pair faces the query's true heading wins, and a reverse-direction query simply matches the rear ($180^\circ$) pair. FOV gating~\eqref{eq:fov_overlap} down-weights pairs the narrow query cannot overlap. Two panoramic sensors give $6{\times}6{=}36$ branches (Table~\ref{tab:ablation_NK}a).

\label{sec:hypothesis}
\textbf{Heading-Hypothesis Consistency Filtering.}
Branch results may be mutually inconsistent (a branch suggesting $\Delta\psi \approx -90^\circ$ vs.\ one suggesting $0^\circ$). We quantize all branches into $\lfloor 360^\circ / \Delta_{\text{step}} \rfloor$ heading hypotheses $\{\mathcal{H}_\psi\}$ by each branch's heading hint~\eqref{eq:hint}, with $\Delta_{\text{step}}{=}90^\circ$ the cardinal quantization step at which our virtual pairs align.

Each candidate $m$ is scored only over the branches $\mathcal{B}_m$ that nominated it, so incompatible heading evidence never mixes. The final heading is the winning branch of the max-over-branches aggregation (\S\ref{sec:scoring}, using $\max$ not mean to avoid bias from differing branch counts), while the hypothesis groups $\{\mathcal{H}_\psi\}$ feed the SGRT uniqueness test (\S\ref{sec:gap_score}).

\subsection{Stage 2: FOV-Aware Geometric Scoring}
\label{sec:bev}
For geometric verification, each sensor pair constructs a polar BEV grid of size $N_r \times N_s$ ($N_r$ range rings, $N_s$ azimuth bins), uniformly discretizing range $[0, R_{\max}]$ and the full $360^\circ$ azimuth. Three grid channels are computed:

\textbf{Occupancy} $\mu(r, s)$. Binary occupancy smoothed by an adaptive polar Gaussian blur. As derived in~\cite{lee2026probe}, marginalizing isotropic Cartesian translational uncertainty through the polar Jacobian yields, to first order, a distance-adaptive angular bandwidth $\sigma_\theta \propto 1/r$, with a density-adaptive $\sqrt{\rho}$ scaling (also from~\cite{lee2026probe}) that prevents excessive dilation of isolated returns on sparse rings. The effective angular kernel width for a ring with occupancy density $\rho(r)$ is:
\begin{equation}
    \sigma_\theta^{\text{grid}}(r) = \frac{\sigma_T \sqrt{\rho(r)}}{r \cdot \Delta_\theta},
    \label{eq:adaptive_blur}
\end{equation}
where $\sigma_T$ is the translational uncertainty, $\rho(r)$ is the ring-level occupancy density (the fraction of occupied bins, computed on the raw, pre-blur occupancy), and $\Delta_\theta = 2\pi/N_s$ is the angular bin width.
The factor $\sigma_T/(r\Delta_\theta)$ follows from an arc displacement $\sigma_T$ subtending angle $\sigma_T/r$ (radians); dividing by the bin width $\Delta_\theta$ (radians per azimuth bin) renders $\sigma_\theta^{\text{grid}}$ in grid-bin units.
Completing the separable approximation, an analogous radial blur of width $\sigma_r = \max(\mathrm{round}(\sigma_T/\Delta_r),\,1)$ bins ($\Delta_r$ the ring width) is applied along the range axis. Retrieval is thereby robust to query--database translational offset, a property inherited from PROBE~\cite{lee2026probe}.

\textbf{Bernoulli uncertainty} $\sigma(r, s) = \sqrt{\mu(1-\mu)}$. Used for shrinkage regularization.

\textbf{Max-Height} $z_\text{max}(r,s)$. Structural signatures.

\label{sec:alignment}
\textbf{FOV-Aware Cross-Correlation Alignment.}
Cross-branch matches (\eg, the narrow query against the database's left pair at $\Delta\psi = +90^\circ$) require aligning grids with different angular coverage.
We define per-branch FOV masks $\mathcal{F}_b$ and compute overlap:
\begin{equation}
    \mathcal{O}(b_q, b_\text{db}, \Delta s) = \mathcal{F}_{b_q} \cap \text{roll}(\mathcal{F}_{b_\text{db}}, \Delta s).
\end{equation}
Following the FFT-based rotation alignment of PROBE~\cite{lee2026probe}, cross-correlation on the max-height channel refines the heading. Our generalization restricts the correlation to the branch's overlapping wedge and searches only the hint-centered window $\mathcal{W}$ rather than the full $360^\circ$ ring:
\begin{equation}
    \Delta s^* = \argmax_{\Delta s \in \mathcal{W}} \text{IFFT}\left[\sum_r \hat{Z}_q^{(r)} \cdot \overline{\hat{Z}_\text{db}^{(r)}}\right](\Delta s),
    \label{eq:fft_align}
\end{equation}
where ${\hat{Z}^{(r)} = \text{FFT}_s((z_{\max}\!\odot\mathcal{O})(r,\cdot))}$ is the DFT of ring $r$'s wedge-masked max-height row ($\mathcal{O}$ fixed at the hint), $\mathcal{W}$ is a narrow (${\sim}{\pm}50^\circ$) window around the branch-specific heading hint (the pairing geometry already constraining the heading to a sector), and the refined heading is ${\Delta\psi = \Delta s^{*}\, 360^\circ/N_s}$ ($\Delta s^{*}$ already subsumes the hint).

\label{sec:scoring}
\textbf{Two-Factor BKL Scoring.}
After alignment, the verification score combines two factors, computed only within the overlapping FOV $\mathcal{O}$ (the refined overlap wedge, extended over all rings):

\textbf{Factor 1: Occupancy-Weighted Height CC.}
The height cross-correlation factor generalizes PROBE's uncentered height cosine~\cite{lee2026probe}. We weight it by the joint occupancy probability $W = p_q p_\text{db}$ (the shrinkage occupancies defined in Factor 2) and mean-center the heights. Writing $\tilde{z}_q = z_q - \bar{z}_q$ and $\tilde{z}_\text{db} = z_\text{db} - \bar{z}_\text{db}$ for the $W$-mean-centered heights,
\begin{equation}
    \text{CC}_z = \max\!\left(0,\; \frac{\sum_{(r,s) \in \mathcal{O}} W\, \tilde{z}_q\, \tilde{z}_\text{db}}{\norm{\tilde{z}_q}_W \, \norm{\tilde{z}_\text{db}}_W}\right),
\end{equation}
where $\norm{x}_W = \big(\sum_{(r,s) \in \mathcal{O}} W\, x^2\big)^{1/2}$ is the $W$-weighted norm and $\bar{z}_\bullet = \sum W z_\bullet / \sum W$ the $W$-weighted mean of each height, all sums running over the jointly height-observed cells of $\mathcal{O}$ (both $z_q$ and $z_\text{db}$ nonzero). This is the standard $W$-weighted Pearson correlation, bounded in $[-1,1]$ by the Cauchy--Schwarz inequality, so the clamp yields $\text{CC}_z \in [0,1]$; when fewer than five cells are jointly height-observed, the branch carries no height evidence and $\text{CC}_z$ is defined as $0$; if a valid overlap has vanishing weighted variance (constant height), the channel is uninformative and $\text{CC}_z \triangleq 1$.
The weighting $W$ anchors the correlation on structure both sensors observe confidently (\eg, building walls), and the mean-centering cancels the shared ground level without any explicit ground segmentation. The $[0,1]$ clamp is deliberate. Negative correlation is no evidence of place identity and is treated as zero evidence, guaranteeing $S_b \geq 0$ as the SGRT ratio \eqref{eq:softmax} requires.

\textbf{Factor 2: Bernoulli-KL Divergence.}
Each cell's shrinkage occupancy $p_i = \mu_i(1-\sigma_i) + \tfrac{1}{2}\sigma_i$ (the uncertainty-proportional shrinkage of~\cite{lee2026probe}) pulls unreliable cells ($\mu\!\approx\!0.5$, high $\sigma$) toward the uninformative $0.5$, absorbing boundary and blur uncertainty. We rename PROBE's Bernoulli-KL Jaccard $\mathcal{J}_{KL}$~\cite{lee2026probe} to $\bkl$ (Bernoulli-KL), omitting the imprecise ``Jaccard.'' The per-cell symmetric divergence between the two shrinkage occupancies is
\begin{equation}
    D_{\mathrm{sym}}(r,s) = \tfrac{1}{2}\!\left(\kl(p_q \| p_\text{db}) + \kl(p_\text{db} \| p_q)\right),
    \label{eq:dsym}
\end{equation}
where $\kl(p \| q) = p \ln \tfrac{p}{q} + (1-p)\ln \tfrac{1-p}{1-q}$ is the Bernoulli KL divergence; $D_{\mathrm{sym}}$ is non-negative by Gibbs' inequality, computed uncoupled so it remains a genuine divergence between the measured occupancies.

\textbf{Column-Wise Trimmed Aggregation.}
We aggregate the symmetric divergence over the co-observed cells $\mathcal{U} = \{(r,s) \in \mathcal{O} : \mu_q(r,s) + \mu_\text{db}(r,s) > 0.1\}$, the occupancy soft-union within the FOV overlap $\mathcal{O}$ (so $\mathcal{U} \subseteq \mathcal{O}$), as a column-trimmed mean. For each azimuthal column $s$ we form the per-column mean divergence and its co-observed weight
\begin{equation}
\begin{aligned}
    d_c(s) &= \frac{1}{W_c(s)}\sum_{r: (r,s) \in \mathcal{U}} D_{\mathrm{sym}}(r,s), \\[2pt]
    W_c(s) &= \big|\{\, r : (r,s) \in \mathcal{U} \,\}\big|.
\end{aligned}
\end{equation}
Let $\mathcal{S} = \{s \mid W_c(s) > 0\}$ with $M_s = |\mathcal{S}|$. To reject dynamic obstacles or localized occlusion, which span several rings over a narrow angular range, we discard the $\mathrm{round}(M_s \cdot \eta)$ highest-divergence columns ($\eta \in [0,1)$), keeping $\mathcal{S}_{\text{kept}}$. The score is the co-observed-weighted average over the kept columns:
\begin{equation}
    \bkl = \exp\left(-\frac{\sum_{s \in \mathcal{S}_{\text{kept}}} W_c(s)\, d_c(s)}{\sum_{s \in \mathcal{S}_{\text{kept}}} W_c(s)}\right),
    \label{eq:bkl}
\end{equation}
where the trimmed set $\mathcal{S}_{\text{kept}} \neq \emptyset$ (as $\mathrm{round}(\eta M_s) < M_s$ at our $\eta$ for any nonempty $\mathcal{U}$). Since the shrinkage occupancies are clipped to $[\epsilon, 1-\epsilon]$ ($\epsilon{=}10^{-6}$), $D_{\mathrm{sym}}$ is finite and non-negative, and $\bkl \in (0,1]$ without clamping. The trim is applied identically to all candidates.

\textbf{Reduction to PROBE.}
With full co-visibility (where the co-observed mask $\mathcal{U}$ specializes to PROBE's soft union) and $\eta = 0$, \eqref{eq:bkl} reduces exactly to PROBE's Bernoulli-KL Jaccard $\mathcal{J}_{KL} = \exp\!\left(-\tfrac{1}{|\mathcal{U}|}\sum_{(r,s) \in \mathcal{U}} D_{\mathrm{sym}}(r,s)\right)$~\cite{lee2026probe}, while the ensemble of branch-local windowed alignments spans all relative headings in place of PROBE's single full-ring correlation (\S\ref{sec:introduction}).

The per-branch score reuses the FOV overlap ratio $w_{\text{FOV}}$ of~\eqref{eq:fov_overlap} as a structural visibility prior. Branches with partial coverage (\eg, a $90^\circ$ overlap out of $180^\circ$) are softly down-weighted, reflecting their reduced geometric information for verification:
\begin{equation}
    S_b = w_{\text{FOV}} \times \text{CC}_z \times \bkl.
    \label{eq:branch_score}
\end{equation}
The multiplicative two-factor structure ($\text{CC}_z \times \bkl$) follows PROBE's $\text{CC}_z \times \mathcal{J}_{KL}$~\cite{lee2026probe}. G-PROBE prepends the FOV-overlap prior $w_{\text{FOV}}$, so on a homogeneous $360^\circ$ pair ($w_{\text{FOV}}{=}1$) the score $S_b$ recovers PROBE's two-factor form. The multiplicative form enforces a veto. Disagreement in any single channel drives $S_b\!\to\!0$, rejecting partial matches (\eg urban canyons agreeing in height, $\text{CC}_z{\approx}1$, but not occupancy, $\bkl{\approx}0$) that a weighted sum would over-score.

\subsection{Probabilistic Softmax Gap Ratio Test (SGRT)}
\label{sec:gap_score}
The final score rejects ambiguous matches by the dominance of the best heading axis over its competitors, via a Softmax over structural axes (antipodal-merged hypotheses) from a two-stage aggregation:

\textbf{Stage 1: Hypothesis scoring (mean aggregation).}
Each hypothesis $\mathcal{H}_\psi$ groups branches whose heading hints round to the same quantized angle. The hypothesis-level score is the arithmetic mean of its constituent branch scores:
\begin{equation}
    S_{\mathcal{H}_\psi} = \frac{1}{|\mathcal{H}_\psi|}\sum_{b \in \mathcal{H}_\psi} S_b.
    \label{eq:hyp_score}
\end{equation}
Branches that did not nominate $m$ contribute $S_b = 0$. Averaging, not maximizing, favors consensus within each heading direction, so one high-scoring branch cannot mask poor neighbors.

\textbf{Stage 2: Antipodal axis grouping (max aggregation).}
A heading $\psi$ and its $180^\circ$ reverse describe the same line of travel, so we merge antipodal hypotheses into structural axes $\mathcal{A}$ indexed by $\psi \bmod 180^\circ$:
\begin{equation}
    \bar{S}_{\mathcal{A}} = \max_{\psi :\, \psi \bmod 180^\circ = \mathcal{A}} S_{\mathcal{H}_\psi}.
    \label{eq:axis_score}
\end{equation}
The maximum gives an axis full credit if either direction matches strongly, essential where both $0^\circ$ and $180^\circ$ score high (reverse-heading aliasing).
In the running example the four hypotheses merge into $|\mathcal{A}|{=}2$ structural axes $\{0^\circ, 90^\circ\}$. SGRT tests dominance over the full axis set.

\begin{figure}[!t]
    \centering
    \includegraphics[width=\columnwidth]{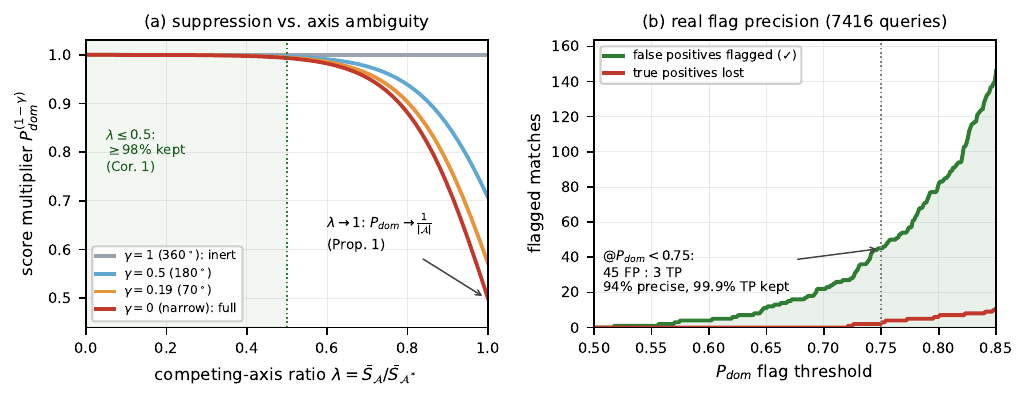}
    \caption{\textbf{$\gamma$-SGRT behavior.} \textbf{(a)~Theory:} the score multiplier $P_{dom}^{\,(1-\gamma)}$ (Eqs.~\eqref{eq:softmax}--\eqref{eq:race_score}, $\kappa{=}10$, $|\mathcal{A}|{=}2$) vs.\ the competing-axis ratio $\lambda{=}\bar{S}_{\mathcal{A}}/\bar{S}_{\mathcal{A}^*}$, as a family over the FOV factor $\gamma$: a panoramic match ($\gamma{=}1$) is inert, a narrow-FOV one ($\gamma{\to}0$) fully suppressed. True positives with $\lambda\leq0.5$ keep $\geq98\%$ of their score (green region, Corollary~\ref{cor:sgrt_safe}). An axis tie ($\lambda\!\to\!1$) collapses to $P_{dom}{\to}1/|\mathcal{A}|$ (Proposition~\ref{prop:sgrt_sym}). \textbf{(b)~Practice:} over the HeLiPR cross-sensor pairings ($7{,}416$ queries), sweeping the flag threshold shows the flags concentrate on aliasing false positives (green) while preserving true positives (red): at $P_{dom}{<}0.75$, $45$ false vs.\ $3$ true positives are flagged ($94\%$ precision, $99.9\%$ of the $2{,}179$ true positives retained).}
    \label{fig:sgrt_behavior}
\end{figure}

\textbf{Adaptive Softmax.}
For score-scale invariance across sensor configurations and point-cloud densities, we employ an adaptive temperature $\tau_\text{sm} = \bar{S}_{\mathcal{A}^*} / \kappa$, where $\kappa > 0$ is a fixed sharpness constant (Fig.~\ref{fig:sgrt_behavior}).
Substituting into the standard Softmax yields a ratio-only expression:
\begin{equation}
    P_{dom} = \frac{1}{1 + \sum_{\mathcal{A} \neq \mathcal{A}^*} \exp\!\left(\kappa \cdot \left(\frac{\bar{S}_{\mathcal{A}}}{\bar{S}_{\mathcal{A}^*}} - 1\right)\right)},
    \label{eq:softmax}
\end{equation}
where $\mathcal{A}^* = \argmax_{\mathcal{A}} \bar{S}_{\mathcal{A}}$ (and $P_{dom}\triangleq 1$ in the degenerate case $\bar{S}_{\mathcal{A}^*}=0$, where no axis has support).

\begin{figure*}[!t]
  \centering
  \includegraphics[width=0.85\textwidth]{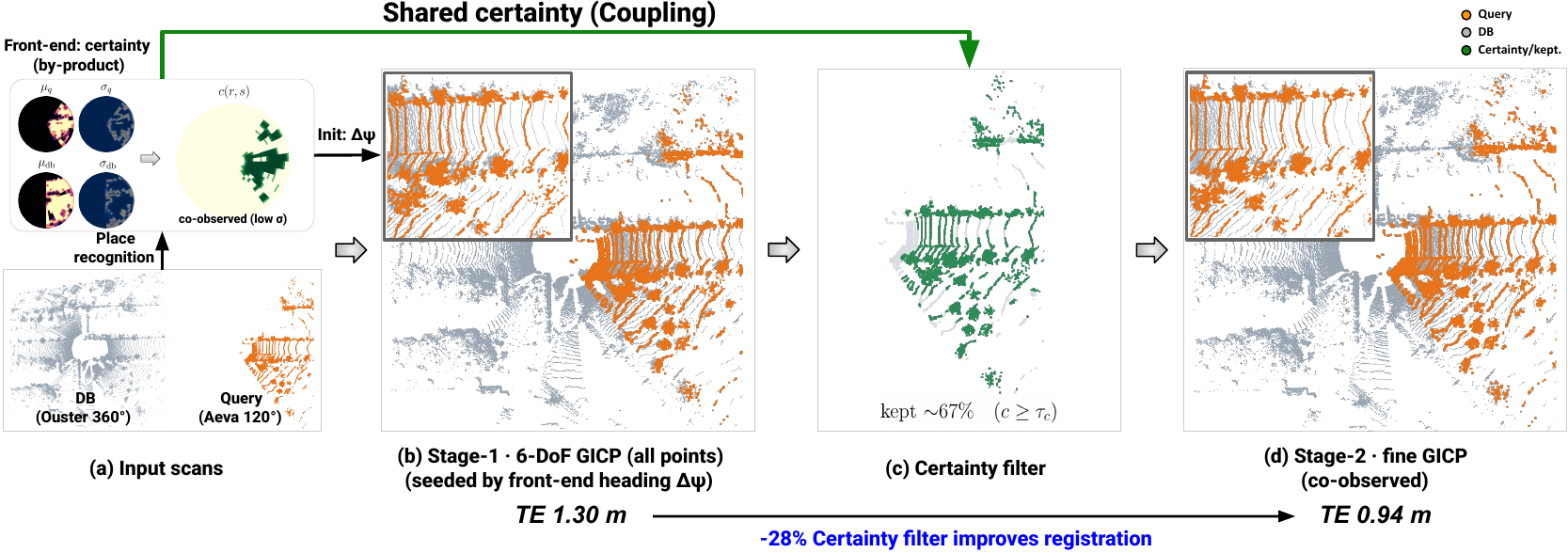}
  \vspace{-6pt}
  \caption{The CG-GICP \textbf{back-end} (Fig.~\ref{fig:pipeline}, part~III) on a real cross-session, cross-sensor revisit: a KAIST06 \textbf{Aeva} ($120^\circ$) query against the cross-day KAIST05 \textbf{Ouster} ($360^\circ$) database. \textbf{Top:} the certainty map $c(r,s){=}(1{-}\sigma_q)(1{-}\sigma_\text{db})\min(\mu_q,\mu_\text{db})$, a by-product of front-end BKL scoring, isolates the co-observed, low-uncertainty region. \textbf{(a)}~Input scans. \textbf{(b)}~Stage~1: a coarse 6-DoF GICP on the full clouds, seeded by the front-end heading $\Delta\psi$. \textbf{(c)}~Certainty filter ($c{\ge}\tau_c$, source-only, ${\sim}67\%$ kept). \textbf{(d)}~Stage~2: a fine GICP on the co-observed points, suppressing the one-sided FOV-boundary correspondences that bias Stage~1 and lowering the translation error from $1.30$ to $0.94$\,m. Aggregate gains are quantified in Table~\ref{tab:cross_sensor_icp}.}
  \label{fig:backend_real}
\end{figure*}

This probability $P_{dom} \in (0, 1]$ is combined with an FOV coverage factor $\gamma$ (the $\gamma$-adaptive exponent that names $\gamma$-SGRT) to produce the final similarity score:
\begin{equation}
    S_\text{final} = \left(\max_{b} S_b\right) \times P_{dom}^{\,(1-\gamma)},
    \label{eq:race_score}
\end{equation}
where $\gamma = \min(\Phi_q, \Phi_{\text{db}}) / 360^\circ \in [0, 1]$ measures the heading observability of the configuration, and $\Phi_q, \Phi_{\text{db}}$ are the total azimuthal FOVs of the query and database sensors.
The exponent $(1-\gamma)$ vanishes at $\gamma = 1$ (panoramic: ${P_{dom}^{0}=1}$, since at full FOV a symmetric match still localizes to the correct place, so heading ambiguity no longer implies a wrong place and warrants no penalty; any residual heading symmetry is left to the back-end) and activates suppression for $\gamma < 1$ in proportion to heading ambiguity (\eg, $P_{dom}^{0.5}$ at $180^\circ$, $P_{dom}^{0.81}$ at $70^\circ$). The factorization keeps the best single-branch quality ($\max_b S_b$) while $P_{dom}^{(1-\gamma)}$ independently penalizes heading ambiguity, so the axis-level averaging~\eqref{eq:hyp_score} never dilutes a strong branch. An aliased environment (symmetric intersections, four-way crossings) flattens the Softmax ($P_{dom}\!\to\!1/|\mathcal{A}|$) and suppresses $S_\text{final}$, while a structurally unique scene (corners, T-junctions) drives $P_{dom}\!\to\!1$ and preserves the geometric score.

SGRT generalizes Lowe's ratio test~\cite{lowe2004distinctive} to all $|\mathcal{A}|$ axes (not only the top two), with a scale-invariant continuous weight (not a binary accept--reject cut) and antipodal (forward--reverse) symmetry.

\begin{proposition}[SGRT Suppression of Symmetric Aliasing]
\label{prop:sgrt_sym}
For a structurally symmetric environment in which $|\mathcal{A}| \geq 2$ structural axes achieve identical scores $S_0 > 0$, every ratio in~\eqref{eq:softmax} equals $1$, so $P_{dom} = 1/|\mathcal{A}|$ and $S_\text{final} = (\max_b S_b)\,|\mathcal{A}|^{-(1-\gamma)} < \max_b S_b$ for any $\gamma < 1$ (reducing to $(\max_b S_b)/|\mathcal{A}|$ at the narrow-FOV limit $\gamma = 0$), a strict attenuation independent of $S_0$ and $\kappa$.
\end{proposition}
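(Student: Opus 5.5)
The plan is a direct substitution into the closed-form ratio expression~\eqref{eq:softmax}, followed by the monotonicity of a power of a number below one. The hypothesis is that every one of the $|\mathcal{A}|\geq 2$ axes attains the common score $S_0>0$, i.e.\ $\bar{S}_{\mathcal{A}} = S_0$ for all $\mathcal{A}$. This places us outside the degenerate convention ($\bar{S}_{\mathcal{A}^*}=S_0\neq 0$), so~\eqref{eq:softmax} is the operative definition. The tie-breaking rule in $\argmax$ is irrelevant, since any choice of $\mathcal{A}^*$ gives the same $\bar{S}_{\mathcal{A}^*}=S_0$.

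First, observe that each ratio $\bar{S}_{\mathcal{A}}/\bar{S}_{\mathcal{A}^*} = S_0/S_0 = 1$. Every exponent $\kappa(\cdot - 1)$ therefore vanishes, and each exponential equals $1$ regardless of $\kappa$. The sum runs over the $|\mathcal{A}|-1$ non-dominant axes, giving $P_{dom} = 1/(1+|\mathcal{A}|-1) = 1/|\mathcal{A}|$. Neither $S_0$ nor $\kappa$ survives; this is exactly the scale invariance built into the adaptive temperature $\tau_\text{sm}=\bar{S}_{\mathcal{A}^*}/\kappa$.

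Next, substitute into~\eqref{eq:race_score} to get $S_\text{final} = (\max_b S_b)\,|\mathcal{A}|^{-(1-\gamma)}$. Setting $\gamma=0$ gives $(\max_b S_b)/|\mathcal{A}|$.

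For the strict inequality, note that $|\mathcal{A}|\geq 2$ and $1-\gamma>0$ when $\gamma<1$, so $|\mathcal{A}|^{-(1-\gamma)}<1$. What remains is $\max_b S_b>0$. This is the one point needing care, because a strict attenuation of a zero score fails. I would argue it from the hypothesis: $S_0>0$ is some hypothesis mean~\eqref{eq:hyp_score} (via the max in~\eqref{eq:axis_score}) of nonnegative branch scores. Nonnegativity holds since $w_{\text{FOV}}\geq w_{\min}$, $\text{CC}_z\in[0,1]$, and $\bkl\in(0,1]$. Hence at least one $S_b\geq S_0>0$, so $\max_b S_b\geq S_0>0$ and the strict inequality follows.
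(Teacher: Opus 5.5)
Your proposal is correct and follows the paper's own argument, which is embedded directly in the statement: equal axis scores make every ratio in~\eqref{eq:softmax} equal to $1$, so $P_{dom}=1/|\mathcal{A}|$ independently of $S_0$ and $\kappa$, and substituting into~\eqref{eq:race_score} gives the $|\mathcal{A}|^{-(1-\gamma)}$ attenuation. Your extra check that $\max_b S_b \ge S_0 > 0$ (a branch in the winning hypothesis must reach at least its mean) is left implicit in the paper. It is needed for the inequality to be strict, so it is a worthwhile addition.
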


\begin{remark}[Minimum Conditions for SGRT]
\label{rem:sgrt_min}
SGRT needs $|\mathcal{A}| \geq 2$ structural axes, counted over the union of query and database pairs, so one wide side suffices. Against a $70^\circ$ query ($N{=}1$), a $360^\circ$ database ($N{=}4$) still spans multiple hypotheses and SGRT stays active. The $\gamma$-exponent further modulates this. At $\gamma = 1$ (panoramic) SGRT is inactive regardless of $|\mathcal{A}|$. Only when both sides are narrow-FOV ($N = 1$ each, $|\mathcal{A}| = 1$) does $P_{dom} = 1$ hold identically. There, the ensemble adds no aliasing-specific suppression and correctness rests on the raw BKL match score.
\end{remark}

\begin{corollary}[SGRT Non-Degradation of Unambiguous True Positives]
\label{cor:sgrt_safe}
For a true-positive match where the correct heading axis achieves score $\bar{S}_{\mathcal{A}^*}$ and all other axes achieve scores $\bar{S}_{\mathcal{A}} \leq \lambda \cdot \bar{S}_{\mathcal{A}^*}$ with $\lambda < 1$, the SGRT attenuation is bounded:
\begin{equation}
    P_{dom} \geq \frac{1}{1 + (|\mathcal{A}|-1)\exp(\kappa(\lambda - 1))}.
    \label{eq:sgrt_bound}
\end{equation}
For $|\mathcal{A}| = 2$ (\eg a narrow query against a wide database), $\kappa = 10$, $\lambda \leq 0.5$, this gives $P_{dom} \geq 0.993$. Even for a hypothetical $|\mathcal{A}| = 4$ (the deployed axis set has $|\mathcal{A}| \leq 2$), $P_{dom} \geq 0.980$. Since the applied multiplier satisfies $P_{dom}^{(1-\gamma)} \geq P_{dom}$, SGRT preserves $\geq 98\%$ of an unambiguous true positive's score in the worst case ($\gamma = 0$), rising toward $100\%$ as the FOV widens.
\end{corollary}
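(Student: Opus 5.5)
The bound follows from the closed form~\eqref{eq:softmax} by monotonicity. The only properties we use are that $t\mapsto\exp(\kappa(t-1))$ is increasing for $\kappa>0$ and that $x\mapsto 1/(1+x)$ is decreasing on $x\ge 0$.

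\textbf{Step 1: the bound~\eqref{eq:sgrt_bound}.} We first handle the degenerate case. If $\bar{S}_{\mathcal{A}^*}=0$, then $P_{dom}\triangleq 1$ and the bound holds trivially; since the hypothesis presumes a true positive, we take $\bar{S}_{\mathcal{A}^*}>0$ from here on.

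By assumption, each competing axis $\mathcal{A}\neq\mathcal{A}^*$ satisfies $\bar{S}_{\mathcal{A}}/\bar{S}_{\mathcal{A}^*}\le\lambda$. Hence each summand in the denominator of~\eqref{eq:softmax} is at most $\exp(\kappa(\lambda-1))$. There are exactly $|\mathcal{A}|-1$ such summands, so the denominator is at most $1+(|\mathcal{A}|-1)\exp(\kappa(\lambda-1))$. Taking reciprocals gives~\eqref{eq:sgrt_bound}.

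The only subtlety is that the $\argmax$ axis $\mathcal{A}^*$ must coincide with the correct heading axis. This holds because $\lambda<1$ forces the correct axis to be the strict maximum.

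\textbf{Step 2: the numerical instances.} Because the right-hand side of~\eqref{eq:sgrt_bound} is decreasing in $\lambda$, the worst case is $\lambda=0.5$. With $\kappa=10$ we have $\exp(-5)\approx 6.74\times10^{-3}$. This gives $P_{dom}\ge 1/(1+e^{-5})\approx 0.9933$ for $|\mathcal{A}|=2$, and $P_{dom}\ge 1/(1+3e^{-5})\approx 0.9802$ for $|\mathcal{A}|=4$. Both values match the stated constants $0.993$ and $0.980$.

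\textbf{Step 3: from $P_{dom}$ to the applied multiplier.} The multiplier actually applied in~\eqref{eq:race_score} is $P_{dom}^{(1-\gamma)}$. Since $P_{dom}\in(0,1]$ and $1-\gamma\in[0,1]$, we have $P_{dom}^{1-\gamma}=\exp((1-\gamma)\ln P_{dom})$ with $\ln P_{dom}\le 0$. This expression is nondecreasing in $\gamma$, so $P_{dom}^{(1-\gamma)}\ge P_{dom}$, with equality at $\gamma=0$. It increases to $1$ at $\gamma=1$, which gives the claimed monotone rise toward $100\%$ as the FOV widens.

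Combining this with Step 2, and noting that $\max_b S_b$ is untouched by the test, $S_\text{final}$ retains at least a fraction $0.980$ of $\max_b S_b$ in the worst case ($|\mathcal{A}|\le 4$, $\gamma=0$). For the deployed axis set $|\mathcal{A}|\le 2$ the retained fraction is at least $0.993$.

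None of these steps presents a real obstacle. The only point needing care is the bookkeeping in Step 1: the assumption $\lambda<1$ is what makes the correct axis the $\argmax$ in~\eqref{eq:softmax}, and that in turn is what makes every competitor's ratio at most $\lambda$.
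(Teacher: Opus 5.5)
Your proof is correct and follows the paper's argument. Like the paper, you bound each of the $|\mathcal{A}|-1$ summands in \eqref{eq:softmax} by $\exp(\kappa(\lambda-1))$ via monotonicity and then take reciprocals; your extra bookkeeping fills in details the paper only asserts, namely the degenerate case, identifying the correct axis with the $\argmax$ via $\lambda<1$, the numerical check at $\lambda=0.5$, and the monotonicity of $P_{dom}^{(1-\gamma)}$ in $\gamma$.
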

\begin{proof}
Every summand in~\eqref{eq:softmax} is at most $\exp(\kappa(\lambda-1))$ by monotonicity. Substituting the $(|\mathcal{A}|-1)$-term bound gives~\eqref{eq:sgrt_bound}.
\end{proof}

\begin{figure*}[t]
    \centering
    \includegraphics[width=0.85\textwidth]{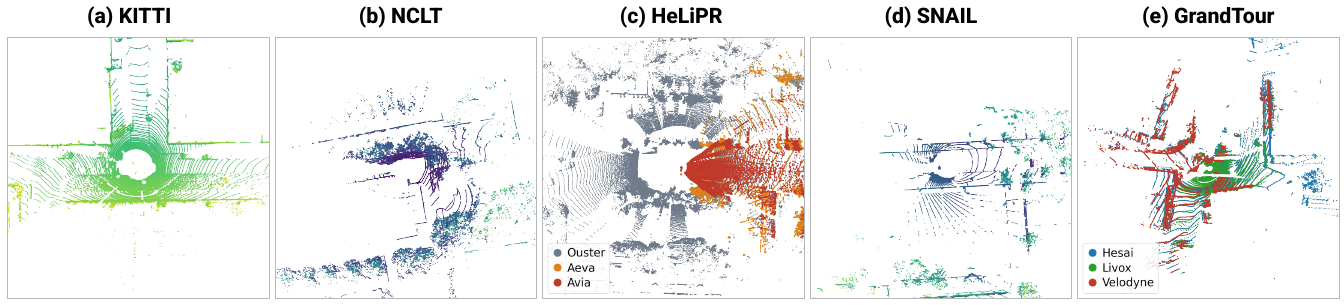}
    \caption{LiDAR point clouds across the five evaluation datasets, one representative top-down frame each on a common $\pm35\,$m scale. Single-sensor datasets (a,\,b,\,d) are height-colored. The two cross-sensor datasets overlay all calibrated LiDARs in one extrinsically registered frame, color-coded by sensor: (c)~HeLiPR fuses the $360^\circ$ Ouster OS2-128, the $120^\circ$ Aeva Aeries-II, and the $70^\circ$ Livox Avia. (e)~GrandTour fuses the Hesai XT32, Livox Mid-360, and Velodyne VLP-16. Beam count and field of view vary widely, the 128-beam Ouster (c) far denser than the 32-beam Hesai XT32 (d,e) or Velodyne HDL-32E (b): the heterogeneity that the cross-sensor experiments (\S\ref{sec:cross_sensor}, \S\ref{sec:grandtour}) target.}
    \label{fig:dataset_overview}
\end{figure*}


\section{CG-GICP: Two-Stage Certainty-Guided Registration}
\label{sec:dice}

Standard ICP on FOV-constrained point clouds is biased by unreliable correspondences at FOV boundaries and in sparse, one-sided regions. CG-GICP addresses this with a coarse-to-fine two-pass Generalized-ICP coupled to the certainty map (\S\ref{sec:certainty}--\S\ref{sec:svd_icp}). This anchoring targets asymmetric-FOV and cross-sensor registration, where a large fraction of the query falls outside the database's reliable observation. Fig.~\ref{fig:backend_real} runs the back-end end-to-end on a real cross-session, cross-sensor pair (KAIST06 Aeva $120^\circ$ query, KAIST05 Ouster $360^\circ$ database).

\subsection{BEV Certainty Map Extraction}
\label{sec:certainty}
As a by-product of BKL scoring (\S\ref{sec:scoring}), the winning branch's aligned grids yield a per-bin certainty weight $c(r,s) \in [0,1]$:
\begin{equation}
    c(r,s) = w_{\text{cert}}(r,s) \cdot w_{\text{occ}}(r,s),
    \label{eq:certainty}
\end{equation}
where $w_{\text{cert}}(r,s) = (1 - \sigma_q(r,s))(1 - \sigma_\text{db}(r,s))$ and $w_{\text{occ}}(r,s) = \min(\mu_q(r,s), \mu_\text{db}(r,s))$ are the mutual certainty and mutual occupancy, zeroed outside the FOV overlap $\mathcal{O}$ and where either scan has high uncertainty ($\sigma > \sigma_{\text{thr}}$) or low occupancy ($\mu < \mu_{\text{thr}}$).
The weight is a conservative conjunction of two fuzzy t-norms, so only mutually well-observed bins score high.

\textbf{Precise role of the certainty map.}
The certainty map does not encode structural distinctiveness. Clearly observed parallel boundaries receive $c \approx 1$ despite their longitudinal degeneracy. Rather, it removes FOV-boundary, sparse, and high-uncertainty bins whose occupancy estimates would otherwise corrupt registration.

\subsection{Certainty-Based Point Filtering}
\label{sec:cert_filter}
Each 3D point is projected to its polar BEV bin $(r_i, s_i)$ and assigned weight $w_i = c(r_i, s_i)$.
Points with $w_i < \tau_c$ are discarded, typically removing 20--60\% of the input (less on well-co-observed pairs) while keeping the reliably co-observed points.
This filtering is asymmetric. Only the query (source) cloud is filtered, the target (database) cloud retained in full.
The asymmetry ensures that for cross-heading matches (\eg, $\Delta\psi = 180^\circ$), target points outside the query's filtered FOV remain valid correspondence targets, preventing empty-neighborhood artifacts in the KD-tree search.

\subsection{Coarse-to-Fine Two-Pass Registration}
\label{sec:svd_icp}
Registration is initialized from G-PROBE's refined heading $\Delta\psi$, a median height offset $\Delta z$, and zero lateral offset, then runs two Generalized-ICP passes.

\textbf{Stage 1: Coarse GICP on all points.}
Generalized-ICP~\cite{segal2009gicp} is applied to the unfiltered source and target clouds:
\begin{equation}
\begin{split}
    T_{\text{GICP}} = \argmin_T \sum_i &(p_i^{\text{tgt}} - T p_i^{\text{src}})^\top \\
    \times &(C_i^{\text{tgt}} + T C_i^{\text{src}} T^\top)^{-1} (p_i^{\text{tgt}} - T p_i^{\text{src}}),
\end{split}
    \label{eq:gicp}
\end{equation}
where $C_i$ are the local surface covariance matrices.
Using all points retains the broadest geometric constraints (particularly ground-plane and vertical structure constraining pitch, roll, and yaw) for a stable coarse pose $T_{\text{GICP}}$ even on sparse sensors.

\textbf{Stage 2: Certainty-filtered refinement.}
A second Generalized-ICP, initialized at $T_{\text{GICP}}$, runs with the source restricted to its certainty-filtered points $\mathcal{S}_c = \{i : c(r_i,s_i) \ge \tau_c\}$ (\S\ref{sec:cert_filter}), matched against the full target:
\begin{equation}
    T_{\text{final}} = \argmin_{T} \sum_{i \in \mathcal{S}_c} \bm{e}_i(T)^\top \big(C_i^{\text{tgt}} + T C_i^{\text{src}} T^\top\big)^{-1} \bm{e}_i(T),
    \label{eq:gicp_fine}
\end{equation}
with residual $\bm{e}_i(T) = p_i^{\text{tgt}} - T p_i^{\text{src}}$ and $p_i^{\text{tgt}}$ the nearest neighbor of $p_i^{\text{src}}$ under $T$.
Since the certainty map is zero outside the co-observed FOV overlap (\S\ref{sec:certainty}), $\mathcal{S}_c$ retains only co-observed points. Because the fine pass is initialized at $T_{\text{GICP}}$, whose rotation is already well-constrained by the all-points coarse pass, it corrects mainly the residual translation and leaves the heading essentially unchanged.

\textbf{Rationale for the two passes.}
A single all-points GICP admits one-sided FOV-edge bias; a filtered-only GICP loses the broad support needed for stable rotation on sparse sensors; the coarse-to-fine order resolves both (compared in \S\ref{sec:cross_sensor_reg}, Table~\ref{tab:cross_sensor_icp}).

\section{Experimental Evaluation}
\label{sec:experiments}

\subsection{Datasets and Setup}
\label{sec:datasets}
We evaluate on five datasets spanning three LiDAR modalities (mechanical spinning, solid-state, FMCW) and four robustness challenges (seasonal drift, extreme weather, cross-sensor, large-scale), each with a representative LiDAR scan in Fig.~\ref{fig:dataset_overview}:
\begin{enumerate}
    \item \textbf{KITTI}~\cite{geiger2012we}: Sequences 00, 02, 05, and 08 (Velodyne HDL-64E, $360^\circ$). Sequence~08 provides reverse-heading loop closures. Sequences 00, 02, and 05 test scalability on diverse suburban environments.
    \item \textbf{NCLT}~\cite{carlevaris2016university}: Sequences 2012-05-26, 2012-08-20, 2012-09-28, and 2013-04-05 (Velodyne HDL-32E, $360^\circ$, Segway): long-term dynamic change, repetitive structure, varied indoor--outdoor trajectories. Three multi-session pairs from 05-26 evaluate seasonal robustness.
    \item \textbf{HeLiPR}~\cite{minwoo2023helipr}: KAIST05/06 and Riverside05/06, each with simultaneous Ouster OS2-128 ($360^\circ$, mechanical spinning), Aeva Aeries-II ($120^\circ$, FMCW), and Livox Avia ($70^\circ$, solid-state), one synchronized cross-sensor benchmark.
    \item \textbf{SNAIL}~\cite{snail2024}: the Hesai XT32 ($360^\circ$) LiDAR of the SNAIL-Radar platform. Two challenge scenarios: (1)~extreme weather (clear$\to$heavy rain, Starlake 2\,km) and (2)~long-term dynamic changes (46-day gap, Info Faculty 2.2\,km).
    \item \textbf{GrandTour}~\cite{frey2026grandtour}: A quadruped (ANYmal-D) carrying three simultaneous $360^\circ$-azimuth LiDARs (Hesai XT32, 32-beam, $120\,\text{m}$; Livox Mid-360, non-repetitive dome, ${\sim}40\,\text{m}$; and Velodyne VLP-16, 16-beam, $100\,\text{m}$) for the FOV-controlled cross-sensor study on a legged platform (\S\ref{sec:grandtour}), spanning an underground level (LEE-1), a rescue-training site (ARC-1), an alpine village (GRI-1), and a rail-yard re-localization pair (SBB-1$\to$2).
\end{enumerate}

\textbf{Baselines.}
We compare against ten methods spanning learning-free (M2DP~\cite{he2016m2dp}, LiDAR Iris~\cite{wang2020lidar}, SC~\cite{kim2018scan}, SC++~\cite{kim2021scan}, SOLiD~\cite{kim2024solid}, RING++~\cite{lu2023ring++}, PROBE~\cite{lee2026probe}) and deep learning (HeLiOS~\cite{lu2024helios}, BEVPlace++~\cite{luo2024bevplace}, UniLGL~\cite{unilgl2024}) approaches.
The three deep baselines use their official pre-trained weights without domain adaptation. HeLiOS is trained on HeLiPR (DCC/KAIST/Riverside 04--06) across the same four LiDAR types including narrow-FOV scans, so its HeLiPR and limited-FOV results form an optimistic in-distribution bound, while its other results reflect genuine transfer. BEVPlace++ is trained on KITTI sequence 00, one of our four KITTI sequences, so its KITTI results partly reflect data seen during training. UniLGL is trained on MCD, a SNAIL-Radar route, and its Garden database, so our SNAIL cells share its training sensor. BEVPlace++ and UniLGL are applied zero-shot to HeLiPR with native pose estimators, with UniLGL's official per-sensor windows (forward $[0,80]$\,m for FOV-limited scans, centered for panoramic). Both rely on a fully-populated BEV grid (half-emptied under our asymmetric protocol), so we treat their asymmetric cross-FOV cells as an out-of-distribution generalization test.

\textbf{Keyframe Selection.} Database keyframes are sampled at $5\,\text{m}$ spatial intervals and query keyframes at $10\,\text{m}$ intervals.

\textbf{Evaluation Protocol.} A database keyframe is a ground-truth positive for a query iff it satisfies both (i)~a Euclidean proximity threshold $d_{\text{pos}} = 7.5\,\text{m}$ (identical for intra-sequence and multi-session), and (ii)~a field-of-view (FOV) co-visibility criterion. The latter measures the angular intersection of the two scans' azimuthal coverage arcs (each arc set by its sensor FOV and scan heading), normalized by the smaller of the two FOVs:
\begin{equation}
    \nu_{\text{cov}} \;=\; \frac{\,\angle\!\left[\,\mathrm{arc}(\Phi_q) \cap \mathrm{arc}(\Phi_{\text{db}})\,\right]}{\min\!\left(\Phi_q,\,\Phi_{\text{db}}\right)} \;\ge\; 0.25 .
    \label{eq:gt_covis}
\end{equation}
We normalize by the minimum FOV (intersection-over-minimum, not IoU). Under asymmetry the question is whether the narrower sensor's view lies within the wider's coverage. IoU would penalize the wider sensor's unshared field. This fixes two conventional-ground-truth failures: distance-only metrics falsely accept oppositely-facing sensors, and heading-difference metrics over-penalize valid partial overlaps under high asymmetry ($360^\circ$ vs.\ $70^\circ$). In the symmetric $360^\circ$ case the criterion saturates ($\nu_{\text{cov}} \equiv 1$), reducing exactly to the conventional distance-only definition. Temporal exclusion uses $d_{\text{exc}} = 20\,\text{m}$ trajectory distance.

\textbf{Metrics.} Following the standard top-1 loop-closure protocol of SC~\cite{kim2018scan}, SC++~\cite{kim2021scan}, and RING++~\cite{lu2023ring++}, each query contributes a single decision (its top-1 retrieved match), accepted when its similarity exceeds a threshold $\tau$. Sweeping $\tau$ over the observed scores yields a precision--recall curve with
\begin{equation}
    \text{Precision} = \frac{\text{TP}}{\text{TP}+\text{FP}}, \qquad
    \text{Recall} = \frac{\text{TP}}{N_{\text{rev}}},
    \label{eq:pr_metrics}
\end{equation}
where a true positive (TP) is an accepted top-1 satisfying the ground-truth criterion above, a false positive (FP) is any other accepted top-1 (including queries with no revisit), and $N_{\text{rev}}$ is the total number of revisit queries. We report \textbf{Recall@1} (R@1, the threshold-free top-1 success rate, $\text{TP}_{\text{top1}}/N_{\text{rev}}$), the \textbf{maximum F1} along the curve, and \textbf{AUC}, the trapezoidal area under it. Since thresholds are drawn from the actual score distribution, AUC is invariant to monotone rescaling, a rank-based comparison across heterogeneous descriptors. All recall terms share the denominator $N_{\text{rev}}$. AUC is our primary single-modality metric. For extreme cross-sensor settings (\S\ref{sec:cross_sensor}) we also emphasize Recall@1.

All G-PROBE and CG-GICP parameters are fixed across all experiments (Table~\ref{tab:params}).

\begin{table}[!t]
\centering
\caption{G-PROBE (front-end) and CG-GICP (back-end) parameters, fixed across all experiments. The $80\,\text{m}$ range clamp is shared by all methods for fair comparison.}
\label{tab:params}
\footnotesize
\setlength{\tabcolsep}{6pt}
\begin{tabular}{@{}llll@{}}
\toprule
\multicolumn{4}{@{}l}{\textit{Front-end (G-PROBE)}} \\
\cmidrule(lr){1-4}
Voxel size & $0.5$\,m & $\sigma_T$ & $2$\,m \\
Polar BEV $N_r{\times}N_s$ & $40{\times}60$ & top-$K$ / branch-$k_b$ & $20$ / $5$ \\
Max range & $80$\,m & Min FOV overlap $w_{\min}$ & $0.3$ \\
SGRT $\kappa$ & $10$ & Column-trim $\eta$ & $0.1$ \\
\midrule
\multicolumn{4}{@{}l}{\textit{Back-end (CG-GICP)}} \\
\cmidrule(lr){1-4}
Max correspondence & $2.0$\,m & Certainty filter $\tau_c$ & $0.01$ \\
Certainty $\sigma_{\text{thr}}$ & $0.4$ & Certainty $\mu_{\text{thr}}$ & $0.15$ \\
Registration voxel & $0.2$\,m & Max iterations & $30$ \\
\bottomrule
\end{tabular}
\end{table}

\subsection{Experiment 1: Standard LiDAR Place Recognition}
\label{sec:lidar_pr}

Tables~\ref{tab:lidar_pr} and~\ref{tab:multisession_pr} evaluate 19 LiDAR configurations: 12 single-session sequences and 7 multi-session pairs spanning seasonal and long-term variations.

\begin{table*}[!t]
    \centering
    \caption{360$^\circ$ Panoramic LiDAR Single-Session Place Recognition (Dataset Averages). $\dagger$: supervised. Best in \textbf{bold}, second best \underline{underlined}.}
    \label{tab:lidar_pr}
    \scriptsize
    \setlength{\tabcolsep}{3.5pt}
    \renewcommand{\arraystretch}{0.95}
    \resizebox{\linewidth}{!}{
\begin{tabular}{@{} ll *{11}{C} @{}}
        \toprule
        \multirow{2}{*}{Dataset} & \multirow{2}{*}{Metric} & \multicolumn{7}{c}{\textit{Learning-free}} & \multicolumn{3}{c}{\textit{Deep Learning}} & \multicolumn{1}{c}{\textit{Proposed}} \\
        \cmidrule(lr){3-9} \cmidrule(lr){10-12} \cmidrule(lr){13-13}
        & & M2DP & Iris & SC & SC++ & SOLiD & RING++ & PROBE & HeLiOS$^\dagger$ & BEVP++$^\dagger$ & UniLGL$^\dagger$ & \textbf{G-PROBE} \\
        \midrule
        \multirow{3}{*}{\shortstack[l]{KITTI\\(4 Seqs)}}
        & R@1 & .624 & .875 & .737 & .844 & .780 & .865 & \underline{.917} & .748 & \textbf{.980} & .885 & .881  \\
        & F1 & .601 & .670 & .689 & .769 & .677 & .485 & \underline{.840} & .662 & \textbf{.928} & .655 & .766  \\
        & AUC & .568 & .660 & .644 & .753 & .661 & .412 & \underline{.832} & .604 & \textbf{.950} & .655 & .749  \\
        \midrule
        \multirow{3}{*}{\shortstack[l]{NCLT\\(4 Seqs)}}
        & R@1 & .214 & .834 & .819 & .865 & .687 & .788 & \underline{.911} & .753 & \textbf{.964} & .790 & .861  \\
        & F1 & .236 & .708 & .657 & .733 & .530 & .502 & \underline{.750} & .586 & \textbf{.781} & .625 & .741  \\
        & AUC & .148 & .588 & .569 & .698 & .452 & .461 & \underline{.786} & .493 & \textbf{.796} & .567 & .700  \\
        \midrule
        \multirow{3}{*}{\shortstack[l]{HeLiPR\\(4 Seqs)}}
        & R@1 & .650 & .713 & .808 & .863 & .682 & .863 & \textbf{.878} & .796 & \textbf{.878} & .726 & \underline{.865}  \\
        & F1 & .638 & .671 & .738 & \textbf{.841} & .558 & .667 & .789 & .732 & \underline{.819} & .666 & .802  \\
        & AUC & .578 & .630 & .716 & \underline{.812} & .514 & .653 & .804 & .611 & \textbf{.818} & .538 & .780  \\
        \midrule[1.2pt]
        \multirow{3}{*}{\textit{Average}}
        & R@1 & .496 & .807 & .788 & .857 & .716 & .839 & \underline{.902} & .766 & \textbf{.941} & .800 & .869  \\
        & F1 & .492 & .683 & .695 & .781 & .588 & .551 & \underline{.793} & .660 & \textbf{.843} & .649 & .770  \\
        & AUC & .431 & .626 & .643 & .754 & .542 & .509 & \underline{.807} & .569 & \textbf{.855} & .587 & .743  \\
        \bottomrule
    \end{tabular}
}
    \vspace{0.3em}

    \scriptsize{Sensors: KITTI Velodyne HDL-64E, NCLT Velodyne HDL-32E, HeLiPR Ouster OS2-128.}
\end{table*}

\begin{table*}[!t]
    \centering
    \caption{360$^\circ$ Panoramic LiDAR Long-Term Multi-Session Place Recognition (Dataset Averages). $\dagger$: supervised. Best in \textbf{bold}, second best \underline{underlined}.}
    \label{tab:multisession_pr}
    \scriptsize
    \setlength{\tabcolsep}{3.5pt}
    \renewcommand{\arraystretch}{0.95}
    \resizebox{\linewidth}{!}{
\begin{tabular}{@{} ll *{11}{C} @{}}
        \toprule
        \multirow{2}{*}{Dataset} & \multirow{2}{*}{Metric} & \multicolumn{7}{c}{\textit{Learning-free}} & \multicolumn{3}{c}{\textit{Deep Learning}} & \multicolumn{1}{c}{\textit{Proposed}} \\
        \cmidrule(lr){3-9} \cmidrule(lr){10-12} \cmidrule(lr){13-13}
        & & M2DP & Iris & SC & SC++ & SOLiD & RING++ & PROBE & HeLiOS$^\dagger$ & BEVP++$^\dagger$ & UniLGL$^\dagger$ & \textbf{G-PROBE} \\
        \midrule
        \multirow{3}{*}{\shortstack[l]{NCLT\\(3 Pairs)}}
        & R@1 & .452 & .744 & .724 & .726 & .517 & .536 & .787 & \underline{.807} & \textbf{.917} & .736 & .779  \\
        & F1 & .472 & .795 & .722 & .728 & .486 & .512 & .792 & .781 & \textbf{.871} & .725 & \underline{.811}  \\
        & AUC & .390 & .693 & .644 & .656 & .365 & .381 & \underline{.751} & .709 & \textbf{.873} & .666 & .742  \\
        \midrule
        \multirow{3}{*}{\shortstack[l]{HeLiPR\\(2 Pairs)}}
        & R@1 & .639 & .741 & .747 & .715 & .455 & .714 & .784 & .785 & \underline{.859} & .614 & \textbf{.862}  \\
        & F1 & .668 & .805 & .787 & .776 & .448 & .701 & .839 & .781 & \underline{.849} & .620 & \textbf{.892}  \\
        & AUC & .564 & .727 & .696 & .667 & .275 & .589 & \underline{.764} & .654 & \textbf{.828} & .456 & \textbf{.828}  \\
        \midrule
        \multirow{3}{*}{\shortstack[l]{SNAIL\\(2 Pairs)}}
        & R@1 & .800 & .954 & .953 & .927 & .809 & .500 & \underline{.956} & .922 & \textbf{.989} & .913 & .949  \\
        & F1 & .813 & \underline{.968} & .957 & .938 & .808 & .500 & .964 & .922 & \textbf{.988} & .915 & .952  \\
        & AUC & .768 & \underline{.953} & .949 & .923 & .734 & .498 & \underline{.953} & .898 & \textbf{.988} & .890 & .935  \\
        \midrule[1.2pt]
        \multirow{3}{*}{\textit{Average}}
        & R@1 & .630 & .813 & .808 & .789 & .594 & .583 & .842 & .838 & \textbf{.922} & .754 & \underline{.863}  \\
        & F1 & .651 & .856 & .822 & .814 & .581 & .571 & .865 & .828 & \textbf{.903} & .753 & \underline{.885}  \\
        & AUC & .574 & .791 & .763 & .749 & .458 & .489 & .823 & .754 & \textbf{.896} & .671 & \underline{.835}  \\
        \bottomrule
    \end{tabular}
}
    \vspace{0.3em}

    \scriptsize{Sensors: NCLT Velodyne HDL-32E, HeLiPR Ouster OS2-128, SNAIL Hesai XT32.}
\end{table*}

In the symmetric, same-sensor single-session setting, the supervised BEVPlace++ leads across datasets, as expected for a descriptor trained on panoramic same-sensor driving data. Among learning-free methods, G-PROBE is competitive, at an average AUC of $.743$, behind the learning-free leader SC++ ($.754$) and further behind PROBE ($.807$), with PROBE retaining a clear edge on KITTI. We attribute the gap to the ensemble structure. The aligned-pair divergence formula is exactly PROBE's (\S\ref{sec:scoring}), and the ensemble's windowed branch searches trade PROBE's peak same-heading sensitivity for the heading diversity that cross-FOV matching demands. The ordering reverses where that diversity matters. Among learning-free methods G-PROBE leads on the multi-session average AUC ($.835$ vs.\ $.823$, driven by HeLiPR, Table~\ref{tab:multisession_pr}). G-PROBE's advantages emerge under asymmetric FOV (Table~\ref{tab:sym_fov}) and cross-sensor heterogeneity (Table~\ref{tab:cross_sensor_ap}). RING++'s $86.5\%$ R@1 on KITTI (Table~\ref{tab:lidar_pr}) confirms sound retrieval. Its lower AUC reflects the denser maps used here, over $600$ frames each (vs.\ its original sparse $20$\,m map).

\subsection{Experiment 2: Limited-FOV Robustness}
\label{sec:asym_fov}

A heading-invariant system must tolerate a reduced field of view.

\textbf{Symmetric limited FOV.}
We first crop both database and query to the same limited FOV $F \in \{180, 120, 90, 60\}^\circ$. The symmetric (top) block of Table~\ref{tab:sym_fov} reports mean Recall@1 and AUC over the single-session sequences. The learning-free baselines remain functional (AUC $.16$--$.53$), with G-PROBE and PROBE leading in AUC among learning-free methods at all FOVs. G-PROBE attains the best AUC overall (above even the supervised baselines) at $120^\circ$ and $90^\circ$, and is a close second at $60^\circ$ (HeLiOS leads, $.532$ vs.\ $.485$). Among supervised methods the Recall@1 lead splits by FOV width: BEVPlace++ at the wider $\{180,120\}^\circ$, HeLiOS at the narrower $\{90,60\}^\circ$. All methods remain functional.

\textbf{Asymmetric FOV.}
We now introduce FOV asymmetry. The query is cropped from $360^\circ$ down to $60^\circ$ while the database stays at full $360^\circ$, the regime a limited-FOV robot faces against a panoramic map. The asymmetric block of Table~\ref{tab:sym_fov} reports mean Recall@1 and AUC across the 12 single-session sequences of the three datasets. Fig.~\ref{fig:teaser} visualizes the aggregate degradation. G-PROBE recovers the correct heading across the whole heading$\times$FOV space (not only same-/reverse-heading revisits), as the cross-sensor registration accuracy confirms (\S\ref{sec:cross_sensor_reg}).

\begin{table*}[!t]
    \centering
    \caption{Limited-FOV place recognition (Mean R@1 \% / Mean AUC, equal weight per dataset, over the 12 single-session sequences of KITTI, NCLT, and HeLiPR). \textbf{Top}: symmetric control, DB $=$ Query $= F^\circ$. \textbf{Bottom}: asymmetric, DB $360^\circ\!\to\!$ Query $F^\circ$. $\Delta$ $=$ relative drop from $360^\circ$ to $60^\circ$. Best in \textbf{bold}, second best \underline{underlined}, per metric. $\dagger$: supervised.}
    \label{tab:sym_fov}
    \scriptsize
    \setlength{\tabcolsep}{3.5pt}
    \renewcommand{\arraystretch}{0.95}
    \resizebox{\linewidth}{!}{
\begin{tabular}{@{} l *{11}{C} @{}}
        \toprule
\multirow{2}{*}{FOV} & \multicolumn{7}{c}{\textit{Learning-free}} & \multicolumn{3}{c}{\textit{Deep Learning}} & \multicolumn{1}{c}{\textit{Proposed}} \\
        \cmidrule(lr){2-8} \cmidrule(lr){9-11} \cmidrule(lr){12-12}
        & M2DP & Iris & SC & SC++ & SOLiD & RING++ & PROBE & HeLiOS$^\dagger$ & BEVP++$^\dagger$ & UniLGL$^\dagger$ & \textbf{G-PROBE} \\
        \midrule
        \multicolumn{12}{l}{\textit{Symmetric control (DB $=$ Query $= F^\circ$)}} \\
        \midrule
        $180^\circ$ & 52.0 / .405 & 57.1 / .409 & 57.9 / .377 & 59.2 / .422 & 52.4 / .347 & 52.9 / .240 & 60.1 / .518 & \underline{62.0} / .514 & \textbf{67.7} / \textbf{.546} & 59.4 / .436 & 57.9 / \underline{.529}  \\
        $120^\circ$ & 53.0 / .407 & 62.0 / .405 & 66.1 / .353 & 64.9 / .401 & 61.7 / .337 & 50.5 / .236 & 67.2 / .526 & \underline{68.5} / \underline{.567} & \textbf{70.6} / .527 & 63.0 / .448 & 62.7 / \textbf{.568}  \\
        $90^\circ$ & 49.8 / .347 & 60.0 / .379 & 70.5 / .330 & \textbf{71.0} / .371 & 53.8 / .321 & 47.2 / .207 & 66.4 / .497 & \underline{70.7} / \underline{.559} & 69.9 / .482 & 63.6 / .415 & 62.6 / \textbf{.560}  \\
        $60^\circ$ & 51.5 / .363 & 55.4 / .341 & 64.8 / .261 & 64.9 / .285 & 48.2 / .255 & 43.2 / .162 & 61.8 / .402 & \textbf{70.4} / \textbf{.532} & \underline{66.0} / .397 & 59.7 / .375 & 59.1 / \underline{.485}  \\
        \midrule
        \multicolumn{12}{l}{\textit{Asymmetric (DB $360^\circ\!\to\!$ Query $F^\circ$)}} \\
        \midrule
        $360^\circ$ & 49.6 / .431 & 80.7 / .626 & 78.8 / .643 & 85.7 / .754 & 71.7 / .542 & 83.8 / .509 & \underline{90.2} / \underline{.807} & 76.6 / .569 & \textbf{94.1} / \textbf{.855} & 80.0 / .587 & 86.9 / .743  \\
        $180^\circ$ & \phantom{0}2.3 / .001 & \phantom{0}8.6 / .041 & 10.5 / .049 & \phantom{0}7.8 / .030 & \phantom{0}6.5 / .019 & 11.2 / .007 & 10.8 / .023 & 34.7 / .110 & \underline{63.9} / \underline{.323} & \phantom{0}3.7 / .001 & \textbf{78.8} / \textbf{.693}  \\
        $120^\circ$ & \phantom{0}1.2 / .000 & \phantom{0}4.7 / .025 & \phantom{0}6.0 / .030 & \phantom{0}3.9 / .015 & \phantom{0}2.6 / .005 & \phantom{0}5.3 / .002 & \phantom{0}4.5 / .009 & 24.7 / .064 & \underline{40.8} / \underline{.163} & \phantom{0}2.4 / .000 & \textbf{73.6} / \textbf{.623}  \\
        $90^\circ$ & \phantom{0}0.9 / .000 & \phantom{0}3.0 / .013 & \phantom{0}4.1 / .019 & \phantom{0}2.6 / .010 & \phantom{0}1.8 / .002 & \phantom{0}3.0 / .001 & \phantom{0}3.4 / .005 & 19.9 / .040 & \underline{22.0} / \underline{.079} & \phantom{0}1.5 / .000 & \textbf{68.8} / \textbf{.549}  \\
        $60^\circ$ & \phantom{0}0.8 / .000 & \phantom{0}2.3 / .008 & \phantom{0}2.9 / .011 & \phantom{0}1.8 / .008 & \phantom{0}1.6 / .003 & \phantom{0}2.5 / .000 & \phantom{0}2.2 / .002 & \underline{14.1} / .023 & \phantom{0}9.3 / \underline{.034} & \phantom{0}0.7 / .000 & \textbf{53.7} / \textbf{.382}  \\
        \midrule
        $\Delta$ (\%) & $-$98.3\% / $-$100.0\% & $-$97.2\% / $-$98.7\% & $-$96.3\% / $-$98.3\% & $-$97.9\% / $-$98.9\% & $-$97.7\% / $-$99.4\% & $-$97.1\% / $-$99.9\% & $-$97.6\% / $-$99.8\% & \underline{$-$81.6\%} / \underline{$-$95.9\%} & $-$90.2\% / $-$96.1\% & $-$99.1\% / $-$100.0\% & \textbf{$-$38.2\%} / \textbf{$-$48.5\%}  \\
        \bottomrule
    \end{tabular}
    }
\end{table*}

\begin{table*}[!t]
    \centering
    \caption{Cross-Sensor Place Recognition Mean AUC Matrix (HeLiPR KAIST; Riverside reported in the text and table note). $\dagger$: supervised. Best in \textbf{bold}, second best \underline{underlined}.}
    \label{tab:cross_sensor_ap}
    \resizebox{\textwidth}{!}{
    \begin{tabular}{l c c c}
        \toprule
        \diagbox{Database}{Query} & \textbf{Ouster} ($360^\circ$) & \textbf{Aeva} ($120^\circ$) & \textbf{Avia} ($70^\circ$) \\
        \midrule
        \multicolumn{4}{l}{\textit{Single-Session (KAIST05 DB $\to$ KAIST05 Query): Asymmetric FOV}} \\
        \midrule
        \textbf{Ouster ($360^\circ$)} &
        \shortstack[c]{SC++: .859 / RING++: .661 / PROBE: \underline{.871} \\ HeLiOS$^\dagger$: .718 / BEVP++$^\dagger$: \textbf{.962} / UniLGL$^\dagger$: .720 / Ours: .862} &
        \shortstack[c]{SC++: .008 / RING++: .000 / PROBE: .008 \\ HeLiOS$^\dagger$: \underline{.018} / BEVP++$^\dagger$: .010 / UniLGL$^\dagger$: .000 / Ours: \textbf{.488}} &
        \shortstack[c]{SC++: .000 / RING++: .000 / PROBE: .002 \\ HeLiOS$^\dagger$: \underline{.028} / BEVP++$^\dagger$: .002 / UniLGL$^\dagger$: .000 / Ours: \textbf{.329}} \\[0.15em]
        \textbf{Aeva ($120^\circ$)} &
        \shortstack[c]{SC++: .024 / RING++: .000 / PROBE: \underline{.030} \\ HeLiOS$^\dagger$: .002 / BEVP++$^\dagger$: .014 / UniLGL$^\dagger$: .000 / Ours: \textbf{.413}} &
        \shortstack[c]{SC++: \underline{.895} / RING++: .262 / PROBE: \textbf{.910} \\ HeLiOS$^\dagger$: .859 / BEVP++$^\dagger$: .843 / UniLGL$^\dagger$: .495 / Ours: .873} &
        \shortstack[c]{SC++: .198 / RING++: .000 / PROBE: \underline{.394} \\ HeLiOS$^\dagger$: .363 / BEVP++$^\dagger$: .019 / UniLGL$^\dagger$: .012 / Ours: \textbf{.421}} \\[0.15em]
        \textbf{Avia ($70^\circ$)} &
        \shortstack[c]{SC++: .007 / RING++: .000 / PROBE: \underline{.026} \\ HeLiOS$^\dagger$: .006 / BEVP++$^\dagger$: .001 / UniLGL$^\dagger$: .000 / Ours: \textbf{.272}} &
        \shortstack[c]{SC++: .161 / RING++: .002 / PROBE: .386 \\ HeLiOS$^\dagger$: \textbf{.424} / BEVP++$^\dagger$: .042 / UniLGL$^\dagger$: .004 / Ours: \underline{.422}} &
        \shortstack[c]{SC++: .564 / RING++: .289 / PROBE: \textbf{.836} \\ HeLiOS$^\dagger$: \underline{.824} / BEVP++$^\dagger$: .633 / UniLGL$^\dagger$: .499 / Ours: .808} \\
        \midrule
        \multicolumn{4}{l}{\textit{Multi-Session (KAIST05 DB $\to$ KAIST06 Query): Asymmetric FOV + Temporal Change}} \\
        \midrule
        \diagbox{DB (K05)}{Q (K06)} & \textbf{Ouster} ($360^\circ$) & \textbf{Aeva} ($120^\circ$) & \textbf{Avia} ($70^\circ$) \\
        \midrule
        \textbf{Ouster ($360^\circ$)} &
        \shortstack[c]{SC++: .761 / RING++: .622 / PROBE: .825 \\ HeLiOS$^\dagger$: .734 / BEVP++$^\dagger$: \underline{.846} / UniLGL$^\dagger$: .492 / Ours: \textbf{.914}} &
        \shortstack[c]{SC++: \underline{.025} / RING++: .000 / PROBE: .004 \\ HeLiOS$^\dagger$: .006 / BEVP++$^\dagger$: .006 / UniLGL$^\dagger$: .000 / Ours: \textbf{.508}} &
        \shortstack[c]{SC++: .021 / RING++: .000 / PROBE: .007 \\ HeLiOS$^\dagger$: \underline{.023} / BEVP++$^\dagger$: .002 / UniLGL$^\dagger$: .000 / Ours: \textbf{.502}} \\[0.15em]
        \textbf{Aeva ($120^\circ$)} &
        \shortstack[c]{SC++: \underline{.048} / RING++: .000 / PROBE: .020 \\ HeLiOS$^\dagger$: .018 / BEVP++$^\dagger$: .020 / UniLGL$^\dagger$: .000 / Ours: \textbf{.411}} &
        \shortstack[c]{SC++: .784 / RING++: .273 / PROBE: \textbf{.864} \\ HeLiOS$^\dagger$: \underline{.803} / BEVP++$^\dagger$: .765 / UniLGL$^\dagger$: .308 / Ours: .692} &
        \shortstack[c]{SC++: .241 / RING++: .004 / PROBE: .346 \\ HeLiOS$^\dagger$: \textbf{.517} / BEVP++$^\dagger$: .026 / UniLGL$^\dagger$: .021 / Ours: \underline{.412}} \\[0.15em]
        \textbf{Avia ($70^\circ$)} &
        \shortstack[c]{SC++: .014 / RING++: .000 / PROBE: .010 \\ HeLiOS$^\dagger$: \underline{.018} / BEVP++$^\dagger$: .003 / UniLGL$^\dagger$: .000 / Ours: \textbf{.224}} &
        \shortstack[c]{SC++: .086 / RING++: .001 / PROBE: .183 \\ HeLiOS$^\dagger$: \textbf{.335} / BEVP++$^\dagger$: .024 / UniLGL$^\dagger$: .009 / Ours: \underline{.221}} &
        \shortstack[c]{SC++: .653 / RING++: .328 / PROBE: \textbf{.851} \\ HeLiOS$^\dagger$: \underline{.834} / BEVP++$^\dagger$: .707 / UniLGL$^\dagger$: .469 / Ours: .749} \\
        \bottomrule
    \end{tabular}
    }
    \vspace{0.3em}

    \small{The R@1 ranking mirrors the AUC ranking: mean off-diagonal R@1 per block (KAIST single/multi, Riverside single/multi) is $43.9/42.1/30.1/24.9\%$ for G-PROBE vs.\ $29.8/29.6/28.5/26.6\%$ for the strongest baseline (the in-distribution HeLiOS$^\dagger$, which edges G-PROBE only in the Riverside multi-session block). All other baselines (PROBE next, up to $21\%$) stay below $22\%$ in all four blocks.}
\end{table*}

Every learning-free baseline collapses under asymmetric FOV ($-96\%$ or worse). Only G-PROBE remains reliable, even at extreme asymmetry ($60^\circ$). The symmetric block is the control: the same cropped FOVs, shared by both database and query, leave baselines functional (top block of Table~\ref{tab:sym_fov}). Their collapse under asymmetry (a cropped query against a panoramic database) therefore reflects the FOV asymmetry between query and database, not the FOV reduction itself.

\subsection{Experiment 3: Cross-Sensor Generalization (HeLiPR)}
\label{sec:cross_sensor}

We conduct an exhaustive cross-sensor evaluation over the three HeLiPR LiDARs with mutually overlapping fields of view (Ouster OS2-128 $360^\circ$, Aeva Aeries-II $120^\circ$, Livox Avia $70^\circ$) in single-session (\texttt{KAIST05}) and multi-session (\texttt{KAIST05}$\to$\texttt{06}) configurations.
We evaluate Mean AUC across all 9 sensor combinations ($3{\times}3$ matrix), comparing G-PROBE against the six baselines of Table~\ref{tab:cross_sensor_ap}. All methods operate at each sensor's full native horizontal FOV.

G-PROBE is the only method retaining non-trivial AUC across all off-diagonal cross-sensor pairs (Table~\ref{tab:cross_sensor_ap}). Its symmetric FOV-overlap formulation, Eqs.~\eqref{eq:overlap_q}--\eqref{eq:sym_ret}, restricts comparison to co-observed sectors.

The matrix separates into two regimes (Figs.~\ref{fig:pr_curves} and~\ref{fig:matchmap_grid}). On the diagonal (same-sensor) all structurally-capable methods retain signal and G-PROBE is competitive (Ouster$\to$Ouster AUC $.862$ vs.\ SC++ $.859$), though PROBE retains a slight edge across the single-session same-sensor diagonals and a larger one on the multi-session narrow-FOV diagonals. Off-diagonal, the ranking reverses. Pairing a $360^\circ$ Ouster with a narrow-FOV sensor drives all evaluated learning-free and supervised baselines to near-zero AUC ($\leq\!0.08$). Their global descriptors penalize the unobserved field. G-PROBE alone retains usable similarity (Ouster$\to$Aeva $.488/.508$, Ouster$\to$Avia $.329/.502$ on KAIST single/multi-session, more than an order of magnitude above all baselines). The same trend holds on Riverside (Table~\ref{tab:cross_sensor_ap} note, Table~\ref{tab:e2e_localization}). By the same symmetric formulation, the reverse pairings keep signal (Fig.~\ref{fig:pr_curves}), though a wider-FOV database remains the stronger reference. The exception is the narrow-FOV pair (Aeva$\leftrightarrow$Avia), where in-distribution HeLiOS leads three of the four cells (\eg multi-session Aeva$\to$Avia $.517$ vs.\ $.412$) while G-PROBE takes the single-session Aeva$\to$Avia ($.421$ vs.\ $.363$). Where sensors barely co-observe, all methods score near zero. G-PROBE's scores are discounted by the $w_{\text{FOV}}$ prior at minimal overlap rather than inflated.

\begin{figure*}[t]
    \centering
    \includegraphics[width=0.85\textwidth]{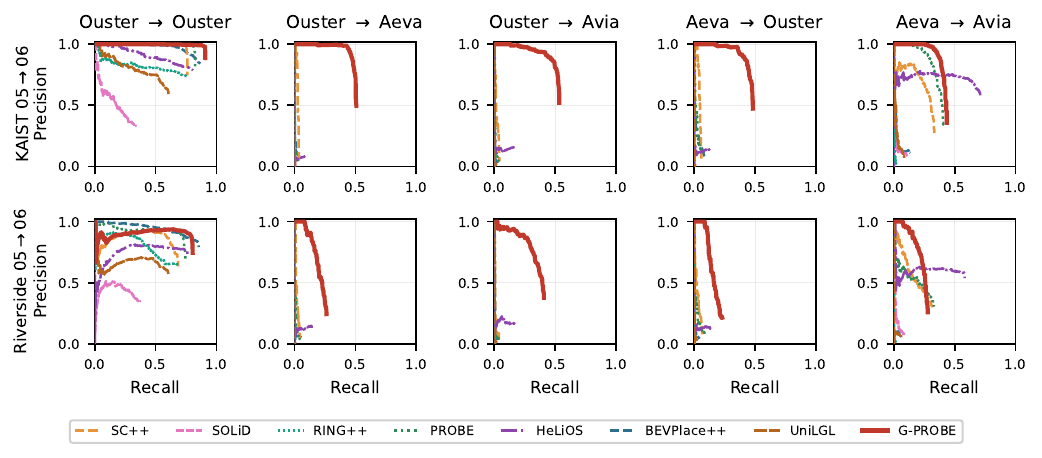}
    \vspace{-8pt}
    \caption{Multi-session precision--recall on HeLiPR ($05\to06$, cross-day). AUCs reproduce the multi-session block of Table~\ref{tab:cross_sensor_ap} (with SOLiD additionally shown). \textbf{Cols 1--3}: $360^\circ$ Ouster database vs.\ progressively narrower queries (Ouster, Aeva~$120^\circ$, Avia~$70^\circ$). \textbf{Col 4}: reverse cross-sensor (Aeva$\to$Ouster). \textbf{Col 5}: narrow-FOV pair (Aeva$\to$Avia). On the wide$\leftrightarrow$narrow cross-sensor cells G-PROBE is the only method with non-trivial area (AUC up to $.51$) while baselines fall onto the axes ($\leq.05$). On the same-sensor and narrow--narrow pairs it is competitive rather than dominant. Consistent across both sites.}
    \label{fig:pr_curves}
\end{figure*}

\begin{figure*}[t]
    \centering
    \includegraphics[width=0.85\textwidth]{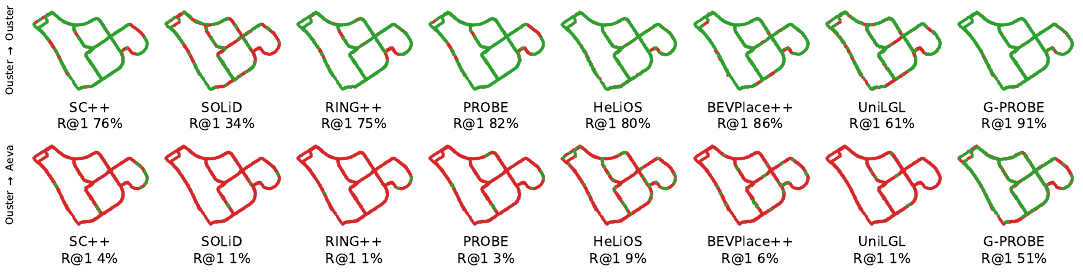}
    \caption{Per-method top-1 match maps on HeLiPR KAIST $05\to06$ (same retrievals as Table~\ref{tab:cross_sensor_ap}): each revisit query is a dot on the KAIST05 trajectory, green if its top-1 is correct ($\leq7.5\,\text{m}$) else red. \textbf{Top} (Ouster$\to$Ouster): all methods retrieve the same-sensor revisits (R@1 $34$--$91\%$, G-PROBE highest). \textbf{Bottom} (Ouster$\to$Aeva~$120^\circ$): the seven baselines (incl.\ SOLiD) turn almost entirely red (R@1 $1$--$9\%$) while G-PROBE alone retains correct top-1 matches ($51\%$). Even PROBE, its probabilistic predecessor, falls to $3\%$.}
    \label{fig:matchmap_grid}
\end{figure*}

\subsection{Experiment 4: FOV-Controlled Cross-Sensor Study on a Legged Platform}
\label{sec:grandtour}

In HeLiPR, FOV asymmetry and sensor heterogeneity are entangled. The GrandTour dataset~\cite{frey2026grandtour} supplies the control that HeLiPR lacks: three simultaneously recorded LiDARs sharing a $360^\circ$ azimuth yet differing in beam count, scan pattern, and range, so its cross-sensor cells isolate the density/pattern/range component alone. The platform is a quadruped (walking-induced roll/pitch), on which we apply G-PROBE zero-shot with parameters and protocol identical to HeLiPR (query keyframes at $5\,\text{m}$ owing to the short missions). Table~\ref{tab:grandtour} spans four sites: three single-session (intra-mission loop closure: LEE-1, ARC-1, GRI-1) and one cross-mission re-localization (SBB-1$\to$SBB-2, 26 revisit queries per cell).

\begin{table}[t]
    \centering
    \caption{FOV-controlled cross-sensor study (GrandTour, quadruped ANYmal): all three LiDARs share a $360^\circ$ FOV, isolating density/pattern/range heterogeneity from FOV asymmetry. Mean R@1 (\%), same-sensor (S) vs.\ cross-sensor (C) cells. Parameters identical to HeLiPR. $\dagger$: supervised zero-shot. The last two columns average S and C over the four sites. Cells aggregate $17$--$26$ revisit queries, so we treat these as a robustness check on a legged platform, not a precise ranking. Best in \textbf{bold}, second best \underline{underlined}.}
    \label{tab:grandtour}
        \resizebox{\columnwidth}{!}{
    \begin{tabular}{l cccc !{\vrule} cc}
        \toprule
        Method & \shortstack{LEE-1\\S\,/\,C} & \shortstack{ARC-1\\S\,/\,C} & \shortstack{GRI-1\\S\,/\,C} & \shortstack{SBB-1$\to$2\\S\,/\,C} & \shortstack{Same\\(mean)} & \shortstack{Cross\\(mean)} \\
        \midrule
        SC++~\cite{kim2021scan} & 26.4 / 20.8 & 61.1 / 39.8 & 44.9 / 29.4 & 67.9 / \textbf{35.9} & 50.1 & 31.5 \\
        SOLiD~\cite{kim2024solid} & 26.4 / 29.9 & 50.0 / 25.0 & 38.9 / 28.5 & 50.0 / 22.4 & 41.3 & 26.5 \\
        RING++~\cite{lu2023ring++} & 13.9 / 15.3 & 48.1 / 34.3 & 35.8 / 24.7 & \underline{69.2} / 19.2 & 41.8 & 23.4 \\
        PROBE~\cite{lee2026probe} & 22.2 / 26.4 & \underline{63.0} / \underline{51.9} & 44.9 / \underline{44.7} & 67.9 / \underline{34.6} & 49.5 & \underline{39.4} \\
        HeLiOS$^\dagger$~\cite{lu2024helios} & \textbf{44.4} / \underline{31.9} & 44.4 / 35.2 & 46.3 / 39.3 & 59.0 / 28.2 & 48.5 & 33.7 \\
        BEVP++$^\dagger$~\cite{luo2024bevplace} & \underline{37.5} / 27.8 & \textbf{70.4} / 49.1 & \textbf{63.2} / 39.9 & \textbf{75.6} / 27.6 & \textbf{61.7} & 36.1 \\
        UniLGL$^\dagger$~\cite{unilgl2024} & 30.6 / \textbf{33.3} & 61.1 / \underline{51.9} & 43.4 / 29.2 & 56.4 / 24.4 & 47.9 & 34.7 \\
        \textbf{G-PROBE (Ours)} & 23.6 / 22.2 & 61.1 / \textbf{55.6} & \underline{55.6} / \textbf{50.7} & 66.7 / 32.7 & \underline{51.8} & \textbf{40.3} \\
        \bottomrule
    \end{tabular}
    }

\end{table}

\begin{figure}[t]
    \centering
    \includegraphics[width=\columnwidth]{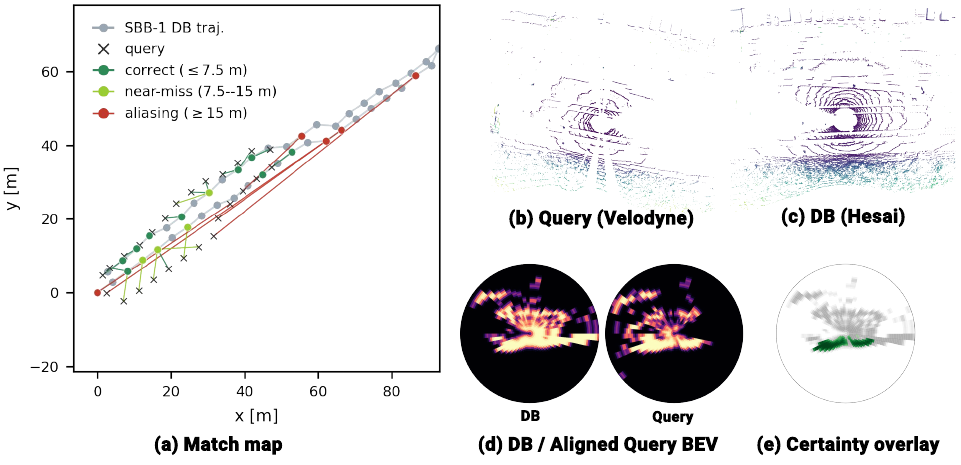}
    \caption{Zero-shot cross-sensor re-localization on the legged ANYmal (GrandTour SBB-1$\to$SBB-2): Hesai XT32 database vs.\ Velodyne VLP-16 query, both $360^\circ$. \textbf{(a)~Match map} (the retrieval behind Table~\ref{tab:grandtour}): each query ($\times$) joins its retrieved keyframe ($\circ$). Green $=$ correct ($\leq7.5\,\text{m}$), lime $=$ near-miss ($7.5$--$15\,\text{m}$, an adjacent keyframe), red $=$ aliasing ($\geq15\,\text{m}$). $\text{R@1}=15/26=57.7\%$. Six of the $11$ misses are near-misses. \textbf{(b,\,c)}~Query (16-beam) and database (32-beam) clouds. \textbf{(d)}~The matched polar occupancy BEVs coincide after alignment by the recovered heading, despite the density gap. \textbf{(e)}~The by-product certainty map $c(r,s)$ (green) retains only co-observed bins, which drive the CG-GICP back-end (\S\ref{sec:dice}).}
    \label{fig:grandtour_map}
\end{figure}

With the FOV shared, the zero-shot supervised baselines that fell to near zero on HeLiPR's cross-sensor cells (BEVPlace++, UniLGL; Table~\ref{tab:cross_sensor_ap}) recover (up to $49$--$52\%$ R@1 on the strongest site, ARC-1, and $35$--$36\%$ cross-sensor mean), indicating that FOV asymmetry, not heterogeneity per se, causes their collapse. Under cross-mission re-localization their scores drop again ($27.6/24.4\%$), and the ordering flips (SC++ $35.9$, PROBE $34.6$, G-PROBE $32.7\%$). Fig.~\ref{fig:grandtour_map} shows one such cell (Hesai$\to$Velodyne), G-PROBE re-localizing zero-shot across a 32-to-16-beam gap. With the retrieval-shortlist baselines depth-matched to G-PROBE ($K{=}20$), G-PROBE leads on the cross-sensor mean ($40.3\%$, Table~\ref{tab:grandtour}), including the strongest cross-sensor cell (ARC-1, $55.6\%$), and is competitive on the same-sensor control (except the underground LEE-1, which the supervised baselines lead). Over the $27$ single-session cells of the three sites, G-PROBE leads the learning-free methods ($44.1\%$ vs.\ PROBE $41.8\%$ and SC++ $34.7\%$). The supervised baselines reach $45.0\%$ (BEVPlace++), $40.4\%$ (UniLGL), $38.7\%$ (HeLiOS, out-of-distribution here), placing G-PROBE second overall, $0.9$ percentage points (pp) behind BEVPlace++.

\begin{table*}[!t]
    \centering
    \caption{Registration Robustness and CG-GICP Architecture Ablation (HeLiPR, multi-session $05\to06$, cross-sensor blocks): per block, translation error TE$_{50}$\,/\,TE$_{95}$ (m), rotation error RE$_{50}$\,/\,RE$_{95}$ ($^\circ$), and success (TE$<2$\,m $\wedge$ RE$<5^\circ$), with identical G-PROBE initialization for all methods. Best in \textbf{bold}, second best \underline{underlined}, per metric within each block.}
    \label{tab:cross_sensor_icp}
    \scriptsize
    \setlength{\tabcolsep}{3.0pt}
    \renewcommand{\arraystretch}{0.95}
    \resizebox{\textwidth}{!}{
    \newcolumntype{T}{>{\centering\arraybackslash}p{15mm}}\newcolumntype{U}{>{\centering\arraybackslash}p{9mm}}
    \begin{tabular}{l TTU TTU TTU TTU}
        \toprule
        & \multicolumn{3}{c}{KAIST $\to$ Aeva ($120^\circ$), $n{=}623$} & \multicolumn{3}{c}{KAIST $\to$ Avia ($70^\circ$), $n{=}618$} & \multicolumn{3}{c}{Riverside $\to$ Aeva ($120^\circ$), $n{=}604$} & \multicolumn{3}{c}{Riverside $\to$ Avia ($70^\circ$), $n{=}604$} \\
        \cmidrule(lr){2-4} \cmidrule(lr){5-7} \cmidrule(lr){8-10} \cmidrule(lr){11-13}
        Method & TE$_{50/95}$ & RE$_{50/95}$ & Succ. & TE$_{50/95}$ & RE$_{50/95}$ & Succ. & TE$_{50/95}$ & RE$_{50/95}$ & Succ. & TE$_{50/95}$ & RE$_{50/95}$ & Succ. \\
        \midrule
        P2P-ICP & 0.79\,/\,1.23 & 1.93\,/\,3.00 & 97.0\% & \textbf{0.68}\,/\,2.38 & 1.62\,/\,3.30 & 93.9\% & 0.77\,/\,1.97 & 1.89\,/\,3.52 & 92.9\% & \textbf{0.63}\,/\,4.27 & 1.91\,/\,3.59 & 84.4\% \\
        GICP & \textbf{0.72}\,/\,\underline{1.15} & \underline{1.85}\,/\,\underline{2.57} & \underline{97.9\%} & \underline{0.81}\,/\,\underline{1.18} & \textbf{1.42}\,/\,\textbf{2.02} & \textbf{97.2\%} & \textbf{0.75}\,/\,\underline{1.89} & 1.85\,/\,\textbf{2.36} & \underline{93.5\%} & \underline{0.79}\,/\,\underline{1.69} & \textbf{1.49}\,/\,\textbf{1.75} & 94.2\% \\
        KISS-ICP & \underline{0.73}\,/\,\underline{1.15} & \textbf{1.79}\,/\,\textbf{2.50} & 97.0\% & \underline{0.81}\,/\,2.94 & \textbf{1.42}\,/\,2.44 & 92.1\% & 0.77\,/\,2.33 & \textbf{1.80}\,/\,4.68 & 91.1\% & 0.82\,/\,2.48 & \underline{1.50}\,/\,2.25 & 90.2\% \\
        \midrule
        GICP 2-Stage$^{\S}$ & 0.72\,/\,1.15 & 1.85\,/\,2.57 & 97.9\% & 0.81\,/\,1.18 & 1.42\,/\,2.02 & 97.2\% & 0.75\,/\,2.20 & 1.85\,/\,2.33 & 93.4\% & 0.79\,/\,1.77 & 1.49\,/\,1.75 & 94.2\% \\
        CG 1-Stage & 0.75\,/\,1.17 & \underline{1.85}\,/\,2.74 & 97.0\% & 0.83\,/\,1.34 & 1.46\,/\,2.33 & \underline{95.8\%} & \underline{0.76}\,/\,2.33 & \underline{1.84}\,/\,2.59 & 92.5\% & 0.82\,/\,1.84 & \textbf{1.49}\,/\,1.93 & \underline{94.4\%} \\
        CG 2-Stage & 0.74\,/\,\textbf{1.14} & \underline{1.85}\,/\,2.58 & \textbf{98.6\%} & 0.83\,/\,\textbf{1.17} & \underline{1.45}\,/\,\underline{2.18} & \textbf{97.2\%} & \underline{0.76}\,/\,\textbf{1.75} & \underline{1.84}\,/\,\underline{2.42} & \textbf{94.0\%} & 0.82\,/\,\textbf{1.61} & \textbf{1.49}\,/\,\underline{1.82} & \textbf{94.7\%} \\
        \bottomrule
    \end{tabular}
    }
    \vspace{0.3em}

    \small{On the same-sensor reference (Ouster$\to$Ouster, both sites, $n{=}620/604$) all variants nearly coincide (TE$_{50}$ within $0.03$\,m, success $98.2$--$99.5\%$, except KISS-ICP's Riverside tail at $95.9\%$). Averaged over the four cross-sensor blocks: Success $=$ P2P $92.0$, GICP $95.7$, GICP 2-Stage$^{\S}$ $95.7$, KISS $92.6$, CG-GICP 1-Stage $94.9$, 2-Stage $\mathbf{96.1}$; TE$_{95}$ (m) $=$ P2P $2.5$, GICP $1.5$, GICP 2-Stage$^{\S}$ $1.6$, KISS $2.2$, 1-Stage $1.7$, 2-Stage $\mathbf{1.4}$; RE$_{95}$ ($^\circ$) $=$ GICP $\mathbf{2.2}$ (best), 2-Stage $2.2$. $^{\S}$\,Control: coarse pass $+$ all-points fine pass, without the certainty filter. It reproduces plain GICP on the KAIST blocks (TE$_{95}$ unchanged at $1.15$ and $1.18$\,m) and regresses on the harder Riverside blocks (TE$_{95}$ $1.89{\to}2.20$ and $1.69{\to}1.77$\,m). The $^{\S}$ control row is excluded from the bold/underline ranking.}
\end{table*}

\subsection{Experiment 5: Cross-Sensor Geometric Registration}
\label{sec:cross_sensor_reg}
We evaluate metric registration on cross-sensor loop-closure pairs: an Ouster $360^\circ$ database against Aeva $120^\circ$ and Avia $70^\circ$ queries on both HeLiPR sites, multi-session ($05\to06$). Each query is paired with its nearest database keyframe within $10\,\text{m}$ (distance-only, without the co-visibility gate of \eqref{eq:gt_covis}), assessing all genuine revisits regardless of FOV overlap. The cross-day protocol is essential, since co-located single-session pairs leave the heading initialization near ground truth and all refiners appear near-perfect. All methods (P2P-ICP~\cite{besl1992icp}, GICP~\cite{segal2009gicp}, KISS-ICP~\cite{vizzo2023kiss}) receive identical G-PROBE initialization (yaw, $\Delta z$) and correspondence distance ($2.0\,\text{m}$), both clouds voxel-downsampled to $0.2\,\text{m}$, isolating the refinement stage. The experiment doubles as the CG-GICP architecture ablation: GICP $=$ coarse pass only, 1-Stage $=$ certainty-filtered fine pass only, 2-Stage $=$ both. Table~\ref{tab:cross_sensor_icp} reports the results.

On the same-sensor Ouster$\to$Ouster reference (see the note to Table~\ref{tab:cross_sensor_icp}), all variants nearly coincide, so the differences below emerge only once full co-visibility is lost. Under cross-sensor asymmetry the methods separate not in the typical case (median $\text{TE}_{50}\approx 0.6$--$0.8\,\text{m}$ for all) but in the worst-case tail. As the query FOV narrows, P2P-ICP's and KISS-ICP's $\text{TE}_{95}$ grow to $2.4$--$4.3\,\text{m}$ (unconstrained where the sensors do not co-observe), pulling success down to $84.4$--$97.0\%$. Certainty-guided filtering then tightens the residual tail: 2-Stage reaches the best average success ($96.1\%$) and the tightest worst-case translation ($\text{TE}_{95}=1.4\,\text{m}$). Most of the gain over P2P-ICP comes from the GICP optimizer itself. The co-visibility filter supplies the final worst-case-tail tightening ($\text{TE}_{95}\,1.5\!\to\!1.4\,\text{m}$). Within the two-pass architecture, the tail tightening is attributable to the certainty filter: the GICP 2-Stage$^{\S}$ control (the same coarse-to-fine pass without the certainty filter) does not tighten the tail (Table~\ref{tab:cross_sensor_icp}, note), since re-optimizing over the uncertain FOV-boundary correspondences re-introduces their bias. Only filtering them out does. The alignment is visualized end-to-end in Fig.~\ref{fig:backend_real}.

Between the two CG-GICP architectures the 2-Stage scheme is the stronger default (with the best success in all cross-sensor blocks). The coarse pass supplies broad constraints that stabilize the heading before the certainty-filtered fine pass. On the voxel-downsampled registration clouds the coarse pass does not perturb the translation under extreme asymmetry. In the harder Riverside blocks the gap is largest (Riverside-Aeva: 2-Stage $\text{TE}_{95}=1.75\,\text{m}$ versus $2.33\,\text{m}$ for 1-Stage). We retain both as configurable variants.

\subsection{Experiment 6: End-to-End Metric Global Localization}
\label{sec:e2e_localization}

\begin{table*}[t]
    \centering
    \caption{End-to-End Metric Global Localization: HeLiPR Multi-Session ($05\to06$). Entries: \textbf{Success\,\% / median TE (m) / RE ($^\circ$)}. Success $=$ top-1 GT-positive $\wedge$ pose $<2\,\text{m}/5^\circ$. Each method uses its native estimator. G-PROBE's two back-ends share the front-end and are near-identical here (success is front-end-dominated, and CG-GICP's registration benefit is isolated in Table~\ref{tab:cross_sensor_icp}). Best in \textbf{bold}, second best \underline{underlined}. ``--'' $=$ $<\!5$ correct retrievals or degenerate estimate.}
    \label{tab:e2e_localization}
    \renewcommand{\arraystretch}{0.9}
    \resizebox{\textwidth}{!}{
    \begin{tabular}{l c c c}
        \toprule
        \diagbox{Database}{Query} & \textbf{Ouster} ($360^\circ$) & \textbf{Aeva} ($120^\circ$) & \textbf{Avia} ($70^\circ$) \\
        \midrule
        \multicolumn{4}{l}{\textit{KAIST05 DB $\to$ KAIST06 Query (cross-day)}} \\
        \midrule
        \textbf{Ouster ($360^\circ$)} &
        \shortstack[c]{SC++ 74.7/0.38/\textbf{0.2} \textbar\ UniLGL$^\dagger$ 40.8/1.22/\underline{1.5} \\ G-PROBE\,(w/o CG) \underline{90.5}/\underline{0.37}/\textbf{0.2} \textbar\ G-PROBE\,(CG) \textbf{91.5}/\textbf{0.36}/\textbf{0.2}} &
        \shortstack[c]{SC++ 4.9/\textbf{0.63}/\underline{2.0} \textbar\ UniLGL$^\dagger$ 0.0/48.61/164.7 \\ G-PROBE\,(w/o CG) \underline{49.2}/0.72/\textbf{1.9} \textbar\ G-PROBE\,(CG) \textbf{50.3}/\underline{0.71}/\textbf{1.9}} &
        \shortstack[c]{SC++ 4.1/\textbf{0.74}/\textbf{1.4} \textbar\ UniLGL$^\dagger$ 0.0/39.43/\underline{155.7} \\ G-PROBE\,(w/o CG) \underline{53.4}/0.84/\textbf{1.4} \textbar\ G-PROBE\,(CG) \textbf{55.0}/\underline{0.83}/\textbf{1.4}} \\[0.15em]
        \textbf{Aeva ($120^\circ$)} &
        \shortstack[c]{SC++ 6.8/0.65/\underline{1.9} \textbar\ UniLGL$^\dagger$ 0.0/74.53/164.7 \\ G-PROBE\,(w/o CG) \underline{39.2}/\underline{0.63}/\textbf{1.8} \textbar\ G-PROBE\,(CG) \textbf{40.6}/\textbf{0.62}/\textbf{1.8}} &
        \shortstack[c]{SC++ \textbf{76.0}/0.35/\textbf{0.3} \textbar\ UniLGL$^\dagger$ 12.2/4.21/\underline{6.3} \\ G-PROBE\,(w/o CG) 65.5/\underline{0.32}/\textbf{0.3} \textbar\ G-PROBE\,(CG) \underline{66.0}/\textbf{0.31}/\textbf{0.3}} &
        \shortstack[c]{SC++ 31.6/\underline{0.89}/\textbf{2.1} \textbar\ UniLGL$^\dagger$ 0.2/17.39/\underline{53.7} \\ G-PROBE\,(w/o CG) \underline{41.1}/\textbf{0.86}/\textbf{2.1} \textbar\ G-PROBE\,(CG) \textbf{41.4}/\textbf{0.86}/\textbf{2.1}} \\[0.15em]
        \textbf{Avia ($70^\circ$)} &
        \shortstack[c]{SC++ 3.3/\underline{1.60}/1.9 \textbar\ UniLGL$^\dagger$ 0.0/--/-- \\ G-PROBE\,(w/o CG) \underline{28.6}/\textbf{0.71}/\underline{1.5} \textbar\ G-PROBE\,(CG) \textbf{29.1}/\textbf{0.71}/\textbf{1.4}} &
        \shortstack[c]{SC++ 16.4/\underline{1.06}/\underline{2.4} \textbar\ UniLGL$^\dagger$ 0.2/10.30/27.0 \\ G-PROBE\,(w/o CG) \underline{21.5}/\textbf{0.94}/\textbf{2.2} \textbar\ G-PROBE\,(CG) \textbf{21.8}/\textbf{0.94}/\textbf{2.2}} &
        \shortstack[c]{SC++ 71.3/\textbf{0.39}/\textbf{0.2} \textbar\ UniLGL$^\dagger$ 25.3/\underline{2.34}/\underline{3.0} \\ G-PROBE\,(w/o CG) \underline{78.7}/\textbf{0.39}/\textbf{0.2} \textbar\ G-PROBE\,(CG) \textbf{79.8}/\textbf{0.39}/\textbf{0.2}} \\[0.15em]
        \midrule
        \multicolumn{4}{l}{\textit{Riverside05 DB $\to$ Riverside06 Query (cross-day)}} \\
        \midrule
        \textbf{Ouster ($360^\circ$)} &
        \shortstack[c]{SC++ 65.5/\textbf{0.17}/\textbf{0.1} \textbar\ UniLGL$^\dagger$ 37.1/\underline{1.34}/\underline{1.0} \\ G-PROBE\,(w/o CG) \underline{80.4}/\textbf{0.17}/\textbf{0.1} \textbar\ G-PROBE\,(CG) \textbf{82.1}/\textbf{0.17}/\textbf{0.1}} &
        \shortstack[c]{SC++ 2.7/\textbf{0.78}/\underline{2.1} \textbar\ UniLGL$^\dagger$ 0.0/46.85/123.1 \\ G-PROBE\,(w/o CG) \underline{24.8}/\underline{0.81}/\textbf{1.8} \textbar\ G-PROBE\,(CG) \textbf{25.4}/\underline{0.81}/\textbf{1.8}} &
        \shortstack[c]{SC++ 3.6/\underline{0.86}/\textbf{1.5} \textbar\ UniLGL$^\dagger$ 0.0/33.55/\underline{89.8} \\ G-PROBE\,(w/o CG) \underline{42.0}/\textbf{0.82}/\textbf{1.5} \textbar\ G-PROBE\,(CG) \textbf{42.2}/\textbf{0.82}/\textbf{1.5}} \\[0.15em]
        \textbf{Aeva ($120^\circ$)} &
        \shortstack[c]{SC++ 5.3/\underline{0.61}/\textbf{1.7} \textbar\ UniLGL$^\dagger$ 0.0/32.03/\underline{151.5} \\ G-PROBE\,(w/o CG) \underline{21.9}/\textbf{0.58}/\textbf{1.7} \textbar\ G-PROBE\,(CG) \textbf{22.4}/\textbf{0.58}/\textbf{1.7}} &
        \shortstack[c]{SC++ \textbf{53.8}/\underline{0.22}/\underline{0.3} \textbar\ UniLGL$^\dagger$ 6.1/6.97/12.2 \\ G-PROBE\,(w/o CG) 47.3/\textbf{0.21}/\textbf{0.2} \textbar\ G-PROBE\,(CG) \underline{47.8}/\textbf{0.21}/\textbf{0.2}} &
        \shortstack[c]{SC++ 27.9/\underline{0.87}/\textbf{2.1} \textbar\ UniLGL$^\dagger$ 0.3/26.00/\underline{57.6} \\ G-PROBE\,(w/o CG) \underline{29.6}/\textbf{0.86}/\textbf{2.1} \textbar\ G-PROBE\,(CG) \textbf{29.9}/\textbf{0.86}/\textbf{2.1}} \\[0.15em]
        \textbf{Avia ($70^\circ$)} &
        \shortstack[c]{SC++ 4.4/0.69/\textbf{1.5} \textbar\ UniLGL$^\dagger$ 0.0/47.64/\underline{133.6} \\ G-PROBE\,(w/o CG) \underline{15.8}/\underline{0.65}/\textbf{1.5} \textbar\ G-PROBE\,(CG) \textbf{16.3}/\textbf{0.64}/\textbf{1.5}} &
        \shortstack[c]{SC++ \textbf{18.2}/0.91/\underline{2.3} \textbar\ UniLGL$^\dagger$ 0.0/28.21/163.3 \\ G-PROBE\,(w/o CG) 15.0/\underline{0.90}/\textbf{2.2} \textbar\ G-PROBE\,(CG) \underline{15.7}/\textbf{0.88}/\textbf{2.2}} &
        \shortstack[c]{SC++ \textbf{65.4}/\textbf{0.23}/\textbf{0.2} \textbar\ UniLGL$^\dagger$ 12.2/3.86/\underline{4.8} \\ G-PROBE\,(w/o CG) 58.8/\underline{0.27}/\textbf{0.2} \textbar\ G-PROBE\,(CG) \underline{59.4}/\underline{0.27}/\textbf{0.2}} \\[0.15em]
        \bottomrule
    \end{tabular}
    }
    \vspace{0.3em}

    \small{$\dagger$: supervised (deep-learned). Diagonal $=$ same-sensor (loop-closure). Off-diagonal $=$ cross-sensor. The TE/RE medians are conditioned on each method's correctly-retrieved set, whose size varies by more than an order of magnitude across methods (e.g.\ on Ouster$\to$Avia, SC++ retrieves ${\sim}20$ pairs vs.\ G-PROBE's ${\sim}370$). A low-recall baseline's median therefore reflects only its few easiest retrievals, so Success, which conjoins retrieval and pose, is the unbiased metric.}
\end{table*}

Having isolated retrieval (\S\ref{sec:cross_sensor}) and registration (\S\ref{sec:cross_sensor_reg}), we now evaluate the complete system. Given a query scan, each method must retrieve a database candidate and recover a 6-DoF metric pose. A query is a localization success only if its top-1 retrieval satisfies the ground-truth positive criterion \eqref{eq:gt_covis} and the pose is within $2\,\text{m}$ and $5^\circ$ of ground truth. The rate is reported over all revisit queries (the Recall@1 denominator).

Following UniLGL's~\cite{unilgl2024} global localization benchmark, each method runs its native pose estimator rather than a forced common back-end:
UniLGL~\cite{unilgl2024} via its GNC-TLS solver without additional registration;
SC++~\cite{kim2021scan} is semi-metric (1-DoF yaw via aligning-key shift) and pairs with GICP, its prescribed ICP-family refinement, initialized by that yaw;
G-PROBE uses its 2-Stage CG-GICP. Because the success metric scores a 6-DoF pose, methods that emit only a 3-DoF planar pose (BEVPlace++~\cite{luo2024bevplace}, RING++~\cite{lu2023ring++}) or no metric pose at all (the retrieval-only descriptors) cannot be scored end-to-end and appear at the place recognition level only (Tables~\ref{tab:lidar_pr}, \ref{tab:multisession_pr}, \ref{tab:cross_sensor_ap}). Table~\ref{tab:e2e_localization} reports the full $3{\times}3$ database$\times$query matrix.

On the diagonal (same-sensor), G-PROBE attains the best success on both Ouster$\to$Ouster blocks ($91.5\%$/$82.1\%$ on KAIST/Riverside) and is competitive on the narrow diagonals (G-PROBE leads KAIST Avia, $79.8\%$ vs.\ SC++ $71.3\%$, while SC++ leads the other three by $\le\!10$\,pp). Off-diagonal, the gap is pronounced. Baselines on the Ouster-paired cells already collapse at retrieval (success $0$--$7\%$), the heading essentially unconstrained for the rare retrieved query (UniLGL median rotation $164.7^\circ$ on Ouster$\to$Aeva), whereas G-PROBE both finds the place and recovers an accurate metric pose, reaching $55.0\%$/$0.83\,\text{m}$/$1.4^\circ$ on KAIST Ouster$\to$Avia and $50.3\%$ on Ouster$\to$Aeva, against $\leq\!4.9\%$ for all baselines, a $10$--$13\times$ gap over the best baseline. The same dominance holds on Riverside ($16$--$42\%$ vs.\ $\leq\!5.3\%$). Only Aeva$\leftrightarrow$Avia leaves a baseline partly functional (SC++ $16$--$32\%$, ahead on one Riverside cell). A strong pose estimator cannot compensate for a query never retrieved, so on the wide$\leftrightarrow$narrow pairings G-PROBE alone succeeds end-to-end. CG-GICP nonetheless matches or exceeds plain GICP in all $18$ cells (mean $+0.7$\,pp).

\subsection{Ablation Study: Column-Trimmed Aggregation}
\label{sec:abl_obs}

This ablation isolates the column-trim $\eta$ (\S\ref{sec:scoring}, \eqref{eq:bkl}), the robustness step that discards the $\eta{=}10\%$ highest-divergence azimuth columns before aggregation. Holding all other components fixed, we sweep $\eta\in\{0,0.05,0.1,0.2,0.3\}$ (Table~\ref{tab:ablation_obs}a) on the HeLiPR multi-session ($05{\to}06$) cells of both sites, where cross-day dynamic content differs most.

\begin{table}[!t]
    \centering
    \caption{Scoring ablations. \textbf{(a)} Column-trim $\eta$ (Mean AUC, HeLiPR both sites, multi-session $05{\to}06$; same-sensor $=$ diagonal, cross-sensor $=$ off-diagonal). The score is robust to $\eta$ (a ${\sim}0.02$ AUC band). We fix a conservative $\eta{=}0.1$ rather than tune to the sweep maximum. \textbf{(b)} SGRT calibration over the twelve off-diagonal multi-session cells ($7{,}416$ revisit queries): flagged matches are several times less likely to be correct (an illustrative $P_{dom}{<}0.9$ grouping of a continuous down-weight; cf.\ Fig.~\ref{fig:sgrt_behavior}).}
    \label{tab:ablation_obs}
    \label{tab:ablation_sgrt}
    \scriptsize\setlength{\tabcolsep}{4.5pt}
    (a)\\[2pt]
    \begin{tabular}{lcc}
        \toprule
        Column-trim $\eta$ & Same-sensor & Cross-sensor \\
        \midrule
        $0$ (no trim) & .672 & .272 \\
        $0.05$ & .678 & .282 \\
        $\mathbf{0.1}$ (default) & .677 & .282 \\
        $0.2$ & .677 & .289 \\
        $0.3$ & .677 & .292 \\
        \bottomrule
    \end{tabular}
    \\[6pt]
    (b)\\[2pt]
    \begin{tabular}{lcc}
        \toprule
        Match group & Queries & Correct (TP) \% \\
        \midrule
        Confident ($P_{dom} \geq 0.9$) & $7{,}141$ ($96.3\%$) & $30.3$ \\
        SGRT-flagged ($P_{dom} < 0.9$) & $275$ ($3.7\%$) & $6.2$ \\
        \bottomrule
    \end{tabular}
\end{table}

The cross-FOV decomposition and FOV masking already provide the cross-sensor robustness. The symmetric BKL (PROBE's divergence under the decomposition) reaches ${\sim}0.38$ AUC on KAIST's wide$\leftrightarrow$narrow cells (Table~\ref{tab:cross_sensor_ap}), where all external baselines drop to near-zero. The column-trim adds a further small, near-tuning-free gain. Pooled over both sites, the $0{\to}0.05$ trim raises the cross-sensor mean AUC by ${+}0.010$ (same-sensor~${+}0.006$) by rejecting the worst dynamic-object/occlusion columns, and the score then stays within a narrow band for $\eta\in[0.05,0.3]$ (cross-sensor rising a further ${\sim}{+}0.01$ to its $\eta{=}0.3$ maximum, Table~\ref{tab:ablation_obs}a). Recall@1 is essentially unchanged (the ranking set by the decomposition). The effect is largest on the wide$\leftrightarrow$narrow cross-sensor cells (\eg ${+}0.05$ AUC on KAIST Aeva$\to$Ouster), where a few dynamic-object or occlusion columns would otherwise penalize a genuine match. The same-sensor effect is smaller, as static panoramic revisits carry little transient disagreement.

\subsection{Ablation Study: SGRT}
\label{sec:abl_sgrt}

SGRT down-weights a match by its heading-uniqueness probability $P_{dom}$ \eqref{eq:softmax}, suppressing heading aliasing. Such aliasing arises in rotationally near-symmetric scenes (open plazas, four-way intersections, feature-poor stretches), uncommon in HeLiPR's directional driving. SGRT therefore substantially down-weights only ${\sim}3.7\%$ of revisit queries and leaves aggregate retrieval metrics essentially unchanged. The meaningful question is thus calibration rather than aggregate effect: whether the matches SGRT flags ($P_{dom}$ low) are the unreliable ones. Table~\ref{tab:ablation_sgrt}(b) answers this. Fig.~\ref{fig:sgrt_behavior}(b) visualizes the calibration.

The point-biserial correlation $\rho(P_{dom},\mathrm{TP}) = +0.15$ ($p < 10^{-36}$) confirms the signal. SGRT flags matches several times less likely to be correct ($6.2\%$ vs.\ $30.3\%$): \eg, a cross-session KAIST Ouster($360^\circ$)$\to$Aeva($120^\circ$) query whose aliased top-1 lies $84\,\text{m}$ from the true revisit scores comparably across its dominant heading axes, so the flattened Softmax ($P_{dom}{=}0.67$) drives it below threshold. Disabling it shifts aggregate precision--recall within noise, and the score-scale-invariant ratio \eqref{eq:softmax} gives identical results for $\kappa\in\{10,15,20\}$.

\subsection{Hyperparameter Ablations: Sector Count \texorpdfstring{$N$}{N} and Retrieval Depth \texorpdfstring{$K$}{K}}
\label{sec:abl_N}\label{sec:abl_topk}

\begin{table}[!t]
    \centering
    \caption{Hyperparameter ablations (HeLiPR KAIST05, single-session). \textbf{(a)} Virtual-sector count $N$ forced on each $360^\circ$ sensor, overriding $N{=}\max(1,\mathrm{round}(\Phi/90^\circ)){=}4$. Cross-FOV Ouster$\to$Aeva decomposes the database only ($\#b{=}\max(1,\binom{N}{2})$), panoramic Ouster$\to$Ouster both ($\#b$ the branches retained after FOV gating). \textbf{(b)} Retrieval depth $K$ ($360^\circ$ database, $k_b{=}5$ fixed). Best in \textbf{bold}, second best \underline{underlined}, per column. The adopted $N{=}4$ / $K{=}20$ is bold in the first column.}
    \label{tab:ablation_NK}
    \scriptsize
    {(a)}\\[2pt]
    \begin{tabular}{cc ccc ccc}
        \toprule
        & sector & \multicolumn{3}{c}{Ouster$\to$Ouster (pano.)} & \multicolumn{3}{c}{Ouster$\to$Aeva (cross-FOV)} \\
        \cmidrule(lr){3-5}\cmidrule(lr){6-8}
        $N$ & ($^\circ$) & R@1 & AUC & $\#b$ & R@1 & AUC & $\#b$ \\
        \midrule
        1 & 360 & 45.6 & .385 & 1 & 26.0 & .250 & 1 \\
        2 & 180 & 45.6 & .385 & 1 & \phantom{0}1.7 & .007 & 1 \\
        3 & 120 & 66.6 & .595 & 9 & 31.3 & .275 & 3 \\
        \textbf{4} & \textbf{90} & \underline{88.6} & \textbf{.862} & 36 & \textbf{52.0} & \textbf{.488} & 6 \\
        6 & \phantom{0}60 & \textbf{90.8} & \underline{.858} & 189 & 46.7 & .425 & 15 \\
        8 & \phantom{0}45 & 87.2 & .750 & 528 & \underline{51.1} & \underline{.475} & 28 \\
        \bottomrule
    \end{tabular}
    \\[6pt]
    {(b)}\\[2pt]
    \begin{tabular}{c cc cc cc}
        \toprule
        & \multicolumn{2}{c}{Query $360^\circ$ (pano.)} & \multicolumn{2}{c}{Query $120^\circ$} & \multicolumn{2}{c}{Query $90^\circ$} \\
        \cmidrule(lr){2-3}\cmidrule(lr){4-5}\cmidrule(lr){6-7}
        $K$ & R@1 & AUC & R@1 & AUC & R@1 & AUC \\
        \midrule
        10 & 86.1 & .841 & 64.3 & .637 & 60.8 & .598 \\
        \textbf{20} & \underline{88.6} & \underline{.862} & \underline{74.4} & \underline{.732} & \underline{70.7} & \underline{.690} \\
        30 & \textbf{89.2} & \textbf{.867} & \textbf{75.0} & \textbf{.737} & \textbf{71.0} & \textbf{.692} \\
        \bottomrule
    \end{tabular}
\end{table}

\textbf{Virtual-sector count $N$.} The sector count is fixed by $N{=}\max(1,\mathrm{round}(\Phi/90^\circ))$ (\S\ref{sec:problem}), not tuned. Overriding it on two HeLiPR KAIST05 cells (Table~\ref{tab:ablation_NK}a) confirms $N{=}4$ as the cross-FOV optimum on all metrics and the panoramic accuracy--compute knee. Coarser ($N{\le}3$) and finer ($N{\ge}6$) settings both lose AUC, and $N{=}2$ collapses. Its single pair ($\binom{2}{2}{=}1$) yields only one heading hypothesis (losing the heading diversity $N{\ge}3$ supplies), and its two $180^\circ$ sectors sit off the cardinal axes, so a forward query falls on a sector boundary. In the panoramic cell, decomposition supplies heading coverage. A single full-FOV branch ($N{=}1$) cannot align reverse-/lateral-heading revisits (R@1 $45.6\%$), whereas the $N{=}4$ heading ensemble tiles the circle ($88.6\%$). Beyond the knee, $N{=}6$ adds only $+2.2$\,pp R@1 for $5.2\times$ the branches and $N{=}8$ degrades, so the $O(N^4)$ cost outweighs the marginal gain.

\textbf{Retrieval depth $K$.} G-PROBE verifies the top-$K$ vote-ranked candidates with the full FOV-weighted BKL score, all else fixed. Sweeping $K$ from $10$ to $30$ across query FOVs (Table~\ref{tab:ablation_NK}b) shows recall is far from saturated at $K{=}10$ and the margin grows with FOV asymmetry. $K{=}20$ recovers ${+}9.9$/${+}10.1$\,pp R@1 at $90^\circ$/$120^\circ$ against only ${+}2.5$\,pp panoramic, because a cropped query's true match is frequently ranked outside the top-$10$ by the rotation-robust ring-key vote yet wins decisively on geometric verification once shortlisted. The gain saturates ($K{=}20\!\to\!30$ adds ${\le}0.6$\,pp), placing the accuracy--latency knee at the default $K{=}20$ (${+}{\sim}3$\,ms, Table~\ref{tab:runtime_compare}).


\subsection{Computational Analysis}

\begin{table}[t]
    \centering
    \caption{Front-End Runtime (median ms/query, single-threaded Intel Core i7): descriptor extraction $+$ retrieval on the HeLiPR Ouster stream ($1{,}294$-keyframe database). All methods share the same voxel downsampling, and the heterogeneous pose stage is excluded (see text). \textbf{Bold} $=$ fastest, \underline{underline} $=$ second.}
    \label{tab:runtime_compare}
    \renewcommand{\arraystretch}{0.9}
    \resizebox{\columnwidth}{!}{%
    \begin{tabular}{lccc}
        \toprule
        Method & Descriptor [ms] & Retrieval [ms] & Total [ms] \\
        \midrule
        SC++~\cite{kim2021scan}        & \underline{7} & 15 & 22 \\
        SOLiD~\cite{kim2024solid}      & \textbf{5} & $\bm{<\!1}$ & \textbf{5} \\
        PROBE~\cite{lee2026probe}      & \textbf{5} & \underline{1} & \underline{6} \\
        \midrule
        \textbf{G-PROBE} ($K{=}10$)          & 15 & \phantom{0}8 & 23 \\
        \textbf{G-PROBE} ($K{=}20$, default) & 15 & 11 & 26 \\
        \textbf{G-PROBE} ($K{=}30$)          & 15 & 14 & 29 \\
        \bottomrule
    \end{tabular}%
    }
    \vspace{0.3em}

    \small{All learning-free and CPU-only on the same machine. The remaining baselines are GPU-bound (HeLiOS, UniLGL), 3-DoF planar (RING++, BEVPlace++), or dominated by PROBE on the dataset averages (M2DP, Iris, SC; Tables~\ref{tab:lidar_pr}--\ref{tab:multisession_pr}). Shortlist baselines share G-PROBE's $K{=}20$ depth (inapplicable to SOLiD's single global descriptor). The $K$ sweep affects only the batched verification, folded into the Retrieval column.}
\end{table}

Despite enumerating $O(N^4)$ cross-pair branches, the front-end runs in real time on a single CPU thread via three optimizations. (i)~Collision-free hash voxelization. Quantized voxel coordinates are packed into a single integer key and de-duplicated with one \texttt{unique} call. (ii)~Matrix--vector retrieval. Each branch's FOV-masked database ring keys form a contiguous matrix $\bm{K}^{\text{db}}_b$, so all $M$ similarities are one $O(MD)$ product $\bm{K}^{\text{db}}_b\bm{k}^q_b$. (iii)~Batched verification. Shortlisted candidates are scored as one $(K,N_r,N_s)$ tensor, broadcasting the FFT alignment and trimmed-BKL reduction over the batch rather than per-candidate looping.

G-PROBE's front-end totals ${\sim}26$\,ms per keyframe at the default $K{=}20$ on the HeLiPR Ouster $360^\circ$ stream (single thread, ${\sim}38$\,Hz): voxel downsampling ${\sim}8$\,ms, BEV construction over the six pairs ${\sim}4$\,ms, batched BKL scoring ${\sim}9$\,ms over the $K$ candidates, and the rest for sector assignment, ring-key extraction, retrieval, and SGRT. The conditional CG-GICP back-end (${\sim}200$\,ms on a dense $360^\circ$ keyframe, ${\sim}100$\,ms on a $70^\circ$ Avia query) is triggered only on the top-1 candidate passing the front-end threshold, amortizing to $<5$\,ms per keyframe when triggers are sparse (the loop-closure setting). Table~\ref{tab:runtime_compare} shows this is comparable to SC++ ($22$\,ms). The lighter SOLiD/PROBE are faster but retrieval-only or symmetric-FOV, whereas G-PROBE adds full cross-FOV metric localization.

\section{Discussion}
\label{sec:discussion}

\subsection{Why Baselines Fail and G-PROBE Generalizes}
Polar descriptors (SC~\cite{kim2018scan}, SC++~\cite{kim2021scan}, SOLiD~\cite{kim2024solid}) fail under asymmetric FOV truncation. Empty sectors violate the column-shift rotation assumption and act as out-of-distribution noise. Learned models (BEVPlace++~\cite{luo2024bevplace}, UniLGL~\cite{unilgl2024}) rely on a fully-populated BEV grid that half-empties under asymmetry (a training-distribution shift), yet recover once the FOV is shared (\S\ref{sec:grandtour}). G-PROBE circumvents both. Probabilistic BEV modeling scores occupancy distributions directly, so the BKL score needs neither the column-shift rotation assumption nor a fully-populated grid, and virtual-sensor decomposition recovers the overlap under any heading and FOV.

\subsection{Certainty Coupling and Score-Scale Invariance}
Recognition and registration need not be independent. CG-GICP reuses the certainty map already computed by the front-end to refine translation on only the co-observed regions, suppressing the one-sided FOV-boundary correspondences that bias the fit. Its benefit is worst-case rather than typical (\S\ref{sec:cross_sensor_reg}). Independently, $\gamma$-SGRT depends on score ratios, not magnitudes, so equivalent scenes yield identical $P_{dom}$ across FOVs \eqref{eq:softmax}, needing no tuning.

\subsection{Implications for Lifelong Place Recognition}
G-PROBE handles long-term change without explicit change detection or retraining, which we attribute to two complementary mechanisms. The column-wise trimmed BKL aggregation absorbs localized structural change. Demolished or added structure raises occupancy divergence over a narrow azimuthal range, which the trimming step~\eqref{eq:bkl} discards while stable structure anchors the score. The BKL score itself is, in turn, comparatively robust to global point-level degradation (weather, seasonal change) that no trimming could localize. On the multi-session benchmarks, G-PROBE sustains $\text{F1}=0.952$ on SNAIL across a 46-day gap and an extreme weather change (clear$\,\to\,$heavy rain), and $\text{F1}=0.811$ on NCLT over gaps reaching 10 months (Table~\ref{tab:multisession_pr}). It leads the learning-free methods on the multi-session average and outperforms the supervised UniLGL and HeLiOS there, though BEVPlace++ retains the overall lead (Table~\ref{tab:multisession_pr}).

\subsection{Limitations and Future Work}
\textbf{(1) Discrete heading hypotheses.}
The heading hypotheses are quantized at $\Delta_\text{step}$ intervals, so at oblique intersection angles the system conservatively rejects ambiguous matches. Continuous heading estimation via polar BEV cross-correlation remains future work. When both query and database reduce to a single virtual sensor ($|\mathcal{A}|{=}1$, narrowest-to-narrowest), $P_{dom}{\equiv}1$ leaves $\gamma$-SGRT inert, the degenerate case where aliasing suppression rests on the raw match score alone (Remark~\ref{rem:sgrt_min}).

\textbf{(2) Registration scope.}
CG-GICP applies the certainty map as a hard source filter. Soft certainty-weighted covariance scaling inside the GICP objective is future work. Its 6-DoF accuracy is benchmarked on HeLiPR, whose motion is planar-dominant (the ground truth is full SE(3) and CG-GICP assumes no planarity). A dedicated non-planar registration benchmark is left open, with legged-platform generalization shown at the place recognition level (\S\ref{sec:grandtour}).

\textbf{(3) Scope and tuning.}
G-PROBE is deliberately learning-free. A hybrid that pairs learned retrieval features with its geometric scoring could improve recall in point-sparse scenes. Only the single score threshold needs environment-level tuning (standard for place recognition~\cite{kim2018scan,kim2021scan}). The remaining parameters stay fixed across all sequences. The decomposition (Definition~\ref{def:virtual}) applies identically to physically distinct multi-LiDAR rigs. Their evaluation, and aerial platforms, are deferred to future work. G-PROBE's single-shot 6-DoF output can initialize sequential or Monte-Carlo back-ends, which we do not evaluate.

\textbf{(4) Absolute cross-sensor performance.} While G-PROBE is the only method to retain signal across all cross-sensor pairings, its absolute recall there remains modest (R@1 $\sim$25--55\%, AUC $0.2$--$0.5$): it establishes a usable operating point for cross-sensor place recognition, not a solved problem. The hardest regime is the narrow$\leftrightarrow$narrow pair (Aeva$\leftrightarrow$Avia), which co-observes at most the Avia's $70^\circ$ arc (${\sim}19\%$ of the azimuth), limiting the independent structure available for matching. Tighter cross-modal matching remains open.

\textbf{(5) Multiplicative scoring.} The veto form of the branch score~\eqref{eq:branch_score} rejects partial false positives, but conversely a single corrupted channel (transient occupancy, a dynamic object, or height noise) can suppress an otherwise genuine match, a false-negative risk we do not explicitly mitigate.

\section{Conclusion}
\label{sec:conclusion}

We presented G-PROBE, a certainty-coupled descriptor--registration architecture for global localization from 3D point clouds, independent of the physical sensor count by construction (Definition~\ref{def:virtual}).

The contributions are: (1)~a cross-FOV ensemble framework whose virtual sensor decomposition unifies multi-sensor rigs and panoramic sensors, with a $\gamma$-SGRT that provably attenuates symmetric aliasing while preserving unambiguous true positives (Proposition~\ref{prop:sgrt_sym}, Corollary~\ref{cor:sgrt_safe}); (2)~certainty-coupled CG-GICP, where the front-end's BEV certainty maps anchor refinement on co-observed structure without an external verifier; and (3)~a cross-FOV evaluation protocol with FOV co-visibility ground truth across five datasets.

Across KITTI, NCLT, HeLiPR, SNAIL, and GrandTour, G-PROBE achieves the highest average multi-session F1 among learning-free methods (over the NCLT/HeLiPR/SNAIL pairs), competitive single-session AUC, and up to $55.0\%$ vs.\ ${\le}6.8\%$ end-to-end wide$\leftrightarrow$narrow cross-sensor localization success. CG-GICP attains the best average cross-sensor registration success ($96.1\%$) with the tightest worst-case translation ($\text{TE}_{95}$: $1.4\,\text{m}$ vs.\ $1.5\,\text{m}$ for uncoupled GICP and $2.5\,\text{m}$ for P2P-ICP).

\def\IEEEbibitemsep{0pt plus .5pt minus .8pt}


\begin{thebibliography}{56}
\bibitem{galvez2012bags}
D.~G\'alvez-L\'opez and J.~D.~Tard\'os, ``Bags of binary words for fast place recognition in image sequences,'' \textit{IEEE Trans.\ Robot.}, vol.~28, no.~5, pp.~1188--1197, 2012.

\bibitem{kim2018scan}
G.~Kim and A.~Kim, ``Scan Context: Egocentric spatial descriptor for place recognition within 3D point cloud map,'' in \textit{Proc.\ IEEE/RSJ IROS}, 2018, pp.~4802--4809.

\bibitem{kim2021scan}
G.~Kim, S.~Choi, and A.~Kim, ``Scan Context++: Structural place recognition robust to rotation and lateral variations in urban environments,'' \textit{IEEE Trans.\ Robot.}, vol.~38, no.~3, pp.~1856--1874, 2022.

\bibitem{yin2024global}
H.~Yin \etal, ``A survey on global LiDAR localization: Challenges, advances and open problems,'' \textit{Int.\ J.\ Comput.\ Vis.}, vol.~132, no.~8, pp.~3139--3171, 2024.

\bibitem{lu2024helios}
M.~Jung, S.~Jung, H.~Gil, and A.~Kim, ``HeLiOS: Heterogeneous LiDAR place recognition via overlap-based learning and local spherical transformer,'' in \textit{Proc.\ IEEE ICRA}, 2025, pp.~2204--2211.

\bibitem{chen2021overlapnet}
X.~Chen \etal, ``OverlapNet: Loop closing for LiDAR-based SLAM,'' in \textit{Proc.\ RSS}, 2020.

\bibitem{unilgl2024}
H.~Shen \etal, ``UniLGL: Learning uniform place recognition for FOV-limited/panoramic LiDAR global localization,'' \textit{IEEE Trans.\ Robot.}, vol.~42, pp.~1556--1576, 2026.

\bibitem{kim2024solid}
H.~Kim, J.~Choi, T.~Sim, G.~Kim, and Y.~Cho, ``Narrowing your FOV with SOLiD: Spatially organized and lightweight global descriptor for FOV-constrained LiDAR place recognition,'' \textit{IEEE Robot.\ Autom.\ Lett.}, vol.~9, no.~11, pp.~9645--9652, 2024.

\bibitem{tuna2024x}
T.~Tuna \etal, ``X-ICP: Localizability-aware LiDAR registration for robust localization in extreme environments,'' \textit{IEEE Trans.\ Robot.}, vol.~40, pp.~452--471, 2024.

\bibitem{genzicp2025}
D.~Lee, H.~Lim, and S.~Han, ``GenZ-ICP: Generalizable and degeneracy-robust LiDAR odometry using an adaptive weighting,'' \textit{IEEE Robot.\ Autom.\ Lett.}, vol.~10, no.~1, pp.~152--159, 2025.

\bibitem{lee2026probe}
J.~Lee, B.~Lee, and G.~Yoo, ``PROBE: Probabilistic occupancy BEV encoding with analytical translation robustness for 3D place recognition,'' \textit{IEEE Robot.\ Autom.\ Lett.}, early access, 2026, doi:~10.1109/LRA.2026.3703245.

\bibitem{steder2010robust}
B.~Steder, G.~Grisetti, and W.~Burgard, ``Robust place recognition for 3D range data based on point features,'' in \textit{Proc.\ IEEE ICRA}, 2010, pp.~1400--1405.

\bibitem{bosse2013place}
M.~Bosse and R.~Zlot, ``Place recognition using keypoint voting in large 3D lidar datasets,'' in \textit{Proc.\ IEEE ICRA}, 2013, pp.~2677--2684.

\bibitem{he2016m2dp}
L.~He, X.~Wang, and H.~Zhang, ``M2DP: A novel 3D point cloud descriptor and its application in loop closure detection,'' in \textit{Proc.\ IEEE/RSJ IROS}, 2016, pp.~231--237.

\bibitem{wang2020lidar}
Y.~Wang \etal, ``LiDAR Iris for loop-closure detection,'' in \textit{Proc.\ IEEE/RSJ IROS}, 2020, pp.~5769--5775.

\bibitem{cop2018delight}
K.~P.~Cop, P.~V.~K.~Borges, and R.~Dub\'e, ``DELIGHT: An efficient descriptor for global localisation using LiDAR intensities,'' in \textit{Proc.\ IEEE ICRA}, 2018, pp.~3653--3660.

\bibitem{wang2020intensity}
H.~Wang, C.~Wang, and L.~Xie, ``Intensity scan context: Coding intensity and geometry relations for loop closure detection,'' in \textit{Proc.\ IEEE ICRA}, 2020, pp.~2095--2101.

\bibitem{cui2023bow3d}
Y.~Cui, X.~Chen, Y.~Zhang, J.~Dong, Q.~Wu, and F.~Zhu, ``BoW3D: Bag of words for real-time loop closing in 3D LiDAR SLAM,'' \textit{IEEE Robot.\ Autom.\ Lett.}, vol.~8, no.~5, pp.~2828--2835, 2023.

\bibitem{lu2022ring}
S.~Lu, X.~Xu, H.~Yin, Z.~Chen, R.~Xiong, and Y.~Wang, ``One RING to rule them all: Radon sinogram for place recognition, orientation and translation estimation,'' in \textit{Proc.\ IEEE/RSJ IROS}, 2022, pp.~2778--2785.

\bibitem{lu2023ring++}
X.~Xu \etal, ``RING++: Roto-translation invariant Gram for global localization on a sparse scan map,'' \textit{IEEE Trans.\ Robot.}, vol.~39, no.~6, pp.~4616--4635, 2023.

\bibitem{lu2024ringsharp}
S.~Lu \etal, ``RING\#: PR-by-PE global localization with roto-translation equivariant Gram learning,'' \textit{IEEE Trans.\ Robot.}, vol.~41, pp.~1861--1881, 2025.

\bibitem{yuan2023std}
C.~Yuan, J.~Lin, Z.~Zou, X.~Hong, and F.~Zhang, ``STD: Stable triangle descriptor for 3D place recognition,'' in \textit{Proc.\ IEEE ICRA}, 2023, pp.~1897--1903.

\bibitem{yuan2024btc}
C.~Yuan, J.~Lin, Z.~Liu, H.~Wei, X.~Hong, and F.~Zhang, ``BTC: A binary and triangle combined descriptor for 3-D place recognition,'' \textit{IEEE Trans.\ Robot.}, vol.~40, pp.~1580--1599, 2024.

\bibitem{zhang2024survey}
Y.~Zhang, P.~Shi, and J.~Li, ``LiDAR-based place recognition for autonomous driving: A survey,'' \textit{ACM Comput.\ Surv.}, vol.~57, no.~4, Art.~no.~106, 2024.

\bibitem{arandjelovic2016netvlad}
R.~Arandjelovi\'c, P.~Gronat, A.~Torii, T.~Pajdla, and J.~Sivic, ``NetVLAD: CNN architecture for weakly supervised place recognition,'' in \textit{Proc.\ IEEE CVPR}, 2016, pp.~5297--5307.

\bibitem{uy2018pointnetvlad}
M.~A.~Uy and G.~H.~Lee, ``PointNetVLAD: Deep point cloud based retrieval for large-scale place recognition,'' in \textit{Proc.\ IEEE CVPR}, 2018, pp.~4470--4479.

\bibitem{liu2019lpdnet}
Z.~Liu \etal, ``LPD-Net: 3D point cloud learning for large-scale place recognition and environment analysis,'' in \textit{Proc.\ IEEE/CVF ICCV}, 2019, pp.~2831--2840.

\bibitem{komorowski2021minkloc3d}
J.~Komorowski, ``MinkLoc3D: Point cloud based large-scale place recognition,'' in \textit{Proc.\ IEEE/CVF WACV}, 2021, pp.~1789--1798.

\bibitem{vidanapathirana2022logg3d}
K.~Vidanapathirana \etal, ``LoGG3D-Net: Locally guided global descriptor learning for 3D place recognition,'' in \textit{Proc.\ IEEE ICRA}, 2022, pp.~2215--2221.

\bibitem{wang2021disco}
X.~Xu \etal, ``DiSCO: Differentiable scan context with orientation,'' \textit{IEEE Robot.\ Autom.\ Lett.}, vol.~6, no.~2, pp.~2791--2798, 2021.

\bibitem{ma2022overlaptransformer}
J.~Ma, J.~Zhang, J.~Xu, R.~Ai, W.~Gu, and X.~Chen, ``OverlapTransformer: An efficient and yaw-angle-invariant transformer network for LiDAR-based place recognition,'' \textit{IEEE Robot.\ Autom.\ Lett.}, vol.~7, no.~3, pp.~6958--6965, 2022.

\bibitem{luo2024bevplace}
L.~Luo \etal, ``BEVPlace++: Fast, robust, and lightweight LiDAR global localization for unmanned ground vehicles,'' \textit{IEEE Trans.\ Robot.}, vol.~41, pp.~4479--4498, 2025.

\bibitem{xia2023casspr}
Y.~Xia \etal, ``CASSPR: Cross attention single scan place recognition,'' in \textit{Proc.\ IEEE/CVF ICCV}, 2023, pp.~8427--8438.

\bibitem{komorowski2022egonn}
J.~Komorowski, M.~Wysocza\'nska, and T.~Trzci\'nski, ``EgoNN: Egocentric neural network for point cloud based 6DoF relocalization at the city scale,'' \textit{IEEE Robot.\ Autom.\ Lett.}, vol.~7, no.~2, pp.~722--729, 2022.

\bibitem{gupta2026mapclosures}
S.~Gupta \etal, ``Efficiently closing loops in LiDAR-based SLAM using point cloud density maps,'' \textit{Int.\ J.\ Robot.\ Res.}, 2026.

\bibitem{kim2025sherloc}
H.~Kim, M.~Jung, W.~Yang, and A.~Kim, ``SHeRLoc: Synchronized heterogeneous radar place recognition for cross-modal localization,'' \textit{IEEE Robot.\ Autom.\ Lett.}, vol.~10, no.~12, pp.~13264--13271, 2025.

\bibitem{besl1992icp}
P.~J.~Besl and N.~D.~McKay, ``A method for registration of 3-D shapes,'' \textit{IEEE Trans.\ Pattern Anal.\ Mach.\ Intell.}, vol.~14, no.~2, pp.~239--256, 1992.

\bibitem{segal2009gicp}
A.~Segal, D.~Haehnel, and S.~Thrun, ``Generalized-ICP,'' in \textit{Proc.\ RSS}, 2009.

\bibitem{zhang2016degeneracy}
J.~Zhang, M.~Kaess, and S.~Singh, ``On degeneracy of optimization-based state estimation problems,'' in \textit{Proc.\ IEEE ICRA}, 2016, pp.~809--816.

\bibitem{vizzo2023kiss}
I.~Vizzo \etal, ``KISS-ICP: In defense of point-to-point ICP -- Simple, accurate, and robust registration if done the right way,'' \textit{IEEE Robot.\ Autom.\ Lett.}, vol.~8, no.~2, pp.~1029--1036, 2023.

\bibitem{dareslam2021}
K.~Ebadi \etal, ``DARE-SLAM: Degeneracy-aware and resilient loop closing in perceptually-degraded environments,'' \textit{J.\ Intell.\ Robot.\ Syst.}, vol.~102, no.~1, Art.~no.~2, 2021.

\bibitem{yang2021teaser}
H.~Yang, J.~Shi, and L.~Carlone, ``TEASER: Fast and certifiable point cloud registration,'' \textit{IEEE Trans.\ Robot.}, vol.~37, no.~2, pp.~314--333, 2021.

\bibitem{zhang2023mac}
X.~Zhang, J.~Yang, S.~Zhang, and Y.~Zhang, ``3D registration with maximal cliques,'' in \textit{Proc.\ IEEE/CVF CVPR}, 2023, pp.~17745--17754.

\bibitem{lim2022quatro}
H.~Lim \etal, ``A single correspondence is enough: Robust global registration to avoid degeneracy in urban environments,'' in \textit{Proc.\ IEEE ICRA}, 2022, pp.~8010--8017.

\bibitem{qiao2024g3reg}
Z.~Qiao, Z.~Yu, B.~Jiang, H.~Yin, and S.~Shen, ``G3Reg: Pyramid graph-based global registration using Gaussian ellipsoid model,'' \textit{IEEE Trans.\ Autom.\ Sci.\ Eng.}, vol.~22, pp.~3416--3432, 2025.

\bibitem{huang2021predator}
S.~Huang, Z.~Gojcic, M.~Usvyatsov, A.~Wieser, and K.~Schindler, ``PREDATOR: Registration of 3D point clouds with low overlap,'' in \textit{Proc.\ IEEE/CVF CVPR}, 2021, pp.~4267--4276.

\bibitem{qin2022geotransformer}
Z.~Qin \etal, ``Geometric transformer for fast and robust point cloud registration,'' in \textit{Proc.\ IEEE/CVF CVPR}, 2022, pp.~11143--11152.

\bibitem{cattaneo2022lcdnet}
D.~Cattaneo, M.~Vaghi, and A.~Valada, ``LCDNet: Deep loop closure detection and point cloud registration for LiDAR SLAM,'' \textit{IEEE Trans.\ Robot.}, vol.~38, no.~4, pp.~2074--2093, 2022.

\bibitem{shan2020liosam}
T.~Shan \etal, ``LIO-SAM: Tightly-coupled lidar inertial odometry via smoothing and mapping,'' in \textit{Proc.\ IEEE/RSJ IROS}, 2020, pp.~5135--5142.

\bibitem{xu2022fastlio2}
W.~Xu \etal, ``FAST-LIO2: Fast direct LiDAR-inertial odometry,'' \textit{IEEE Trans.\ Robot.}, vol.~38, no.~4, pp.~2053--2073, 2022.

\bibitem{lv2023immesh}
J.~Lin \etal, ``ImMesh: An immediate LiDAR localization and meshing framework,'' \textit{IEEE Trans.\ Robot.}, vol.~39, no.~6, pp.~4312--4331, 2023.

\bibitem{lowe2004distinctive}
D.~G.~Lowe, ``Distinctive image features from scale-invariant keypoints,'' \textit{Int.\ J.\ Comput.\ Vis.}, vol.~60, no.~2, pp.~91--110, 2004.

\bibitem{geiger2012we}
A.~Geiger, P.~Lenz, and R.~Urtasun, ``Are we ready for autonomous driving? The KITTI vision benchmark suite,'' in \textit{Proc.\ IEEE CVPR}, 2012, pp.~3354--3361.

\bibitem{carlevaris2016university}
N.~Carlevaris-Bianco, A.~K.~Ushani, and R.~M.~Eustice, ``University of Michigan North Campus long-term vision and lidar dataset,'' \textit{Int.\ J.\ Robot.\ Res.}, vol.~35, no.~9, pp.~1023--1035, 2016.

\bibitem{minwoo2023helipr}
M.~Jung \etal, ``HeLiPR: Heterogeneous LiDAR dataset for inter-LiDAR place recognition under spatiotemporal variations,'' \textit{Int.\ J.\ Robot.\ Res.}, vol.~43, no.~12, pp.~1867--1883, 2024.

\bibitem{snail2024}
J.~Huai \etal, ``SNAIL radar: A large-scale diverse benchmark for evaluating 4D-radar-based SLAM,'' \textit{Int.\ J.\ Robot.\ Res.}, vol.~44, no.~12, pp.~1941--1958, 2025.

\bibitem{frey2026grandtour}
J.~Frey \etal, ``GrandTour: A legged robotics dataset in the wild for multi-modal perception and state estimation,'' arXiv:2602.18164, 2026.
\end{thebibliography}
\end{document}